\newcommand\numberthis{\addtocounter{equation}{1}\tag{\theequation}}
\title{\Large \bf Pushing the Boundaries of Asymptotic Optimality\\ in Integrated Task and Motion Planning}
\author{Rahul Shome \and Daniel Nakhimovich \and Kostas E. Bekris    }
\institute{Rutgers University, New Brunswick, NJ, USA.}
\newcommand{\graph}{\mathcal{G}}
\newcommand{\fmt}{{\tt FMT}}
\newcommand{\prm}{{\tt PRM}}
\newcommand{\prmstar}{{\tt PRM$^*$}}
\newcommand{\rrt}{{\tt RRT}}
\newenvironment{myitem}{\begin{list}{$\bullet$}
{\setlength{\itemsep}{-0pt}
\setlength{\topsep}{0pt}
\setlength{\labelwidth}{0pt}
\setlength{\leftmargin}{10pt}
\setlength{\parsep}{-0pt}
\setlength{\itemsep}{0pt}
\setlength{\partopsep}{0pt}}}{\end{list}}
\newcommand{\cost}{\mathbf{C}}
\newcommand{\cfree}{\ensuremath{\mathbb{C}_{{\rm free}}}  }
\newcommand{\cobs}{\ensuremath{\mathbb{C}_{{\rm obs}}}  }
\newcommand{\ao}{{\tt AO}}
\newtheorem{assumption}{Assumption}
\newcommand{\workspace}{\mathcal{W}}
\newcommand{\state}{q}
\newcommand{\catstate}{\state}
\newcommand{\startstate}{\state_s}
\newcommand{\goalstate}{\state_g}
\newcounter{model}
\definecolor{darkgreen}{RGB}{30,150,30}
\newcommand{\cspace}{\mathbb{C}}
\newcommand{\orbit}{\mathcal{O}}
\newcommand{\deltaint}{\cspace_{\delta}}
\newcommand{\closedfree}{\overline{\cfree}}
\newcommand{\Chi}{\mathcal{X}}
\newcommand{\ball}[2]{{\mathcal{B}}_{#2}({#1})}
\newcommand{\cone}[4]{{\mathcal{H}}_{#3}({#1},{#2},{#4})}
\newcommand{\prob}{{Pr}}
\newcommand{\trans}{t}
\newcommand{\settrans}{\mathcal{T}}
\newcommand{\mode}{\mathcal{{M}}}
\newcommand{\alg}{\mathtt{ALGO}}
\newcommand{\qstart}{\state_0}
\newcommand{\qtarget}{\state_1}
\newcommand{\oneball}{\mu_1}
\newcommand{\fobt}{\ensuremath{\mathtt{FOBT}}\xspace}
\newcommand{\such}{\text{ such that }}
\newcommand{\iter}{n}
\newcommand{\tamp}{\ensuremath{\mathtt{TAMP}}\xspace}
\newcommand{\edit}[1]{\textcolor{black}{#1}} 
\begin{document}

\maketitle
\thispagestyle{empty}
\pagestyle{plain}

\begin{abstract}
Integrated task and motion planning problems describe a multi-modal state space, which is often abstracted as a set of smooth manifolds that are connected via sets of transitions states. One approach to solving such problems is to sample reachable states in each of the manifolds, while simultaneously sampling transition states. Prior work has shown that in order to achieve asymptotically optimal (AO) solutions for such piecewise-smooth task planning problems, it is sufficient to double the connection radius required for AO sampling-based motion planning. This was shown under the assumption that the transition sets themselves are smooth. The current work builds upon this result and demonstrates that it is sufficient to use the same connection radius as for standard AO motion planning.  Furthermore, the current work studies the case that the transition sets are non-smooth boundary points of the valid state space, which is frequently the case in practice, such as when a gripper grasps an object. This paper generalizes the notion of clearance that is typically assumed in motion and task planning to include such individual, potentially non-smooth transition states. It is shown that asymptotic optimality is retained under this generalized regime.

\end{abstract}

\section{Motivation}
\label{sec:motivation}

\noindent
\begin{wrapfigure}{r}{0.31\textwidth}
    \centering
    \vspace{-0.3in}
    \includegraphics[width=0.3\textwidth]{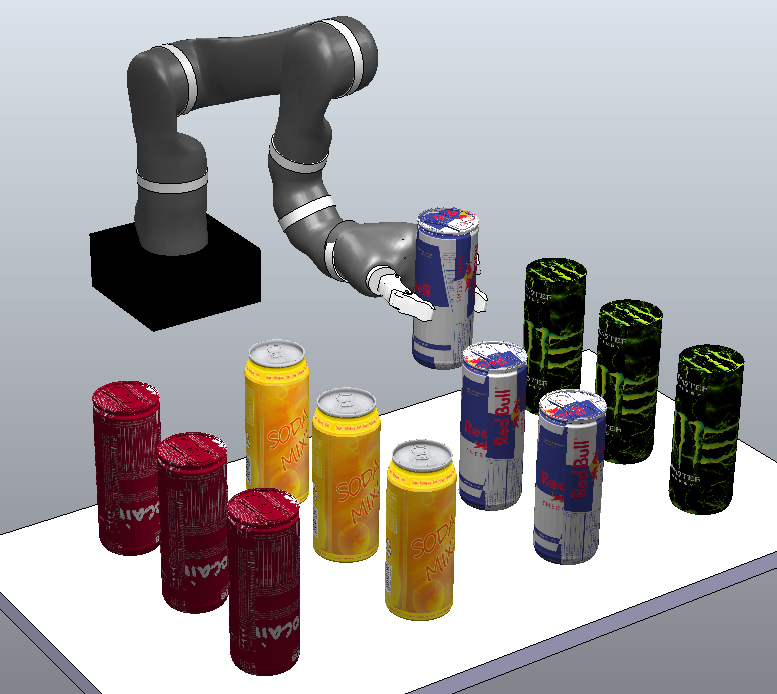}
    \vspace{-0.15in}
    \caption{A multi-modal planning problem where the robot can perform sequences of continuous motions and discrete actions to achieve a target arrangement.}
    \vspace{-0.3in}
    \label{fig:toysims}
\end{wrapfigure}
Integrated task and motion planning (\tamp) corresponds to simultaneously searching for continuous motions and discrete sequences of actions which resolve the target task when combined. Consider the motivational example of prehensile manipulation planning as in Fig.~\ref{fig:toysims}, where a robotic arm must plan both how to move its joints and compute a sequence of discrete grasps and placements. Efficiently solving \tamp problems empowers robots to manipulate objects in the real-world once perception pipelines have provided the location of the detected objects.

Many asymptotically optimal (AO) algorithms for motion planning such as $\prm^*, \rrt^*, \fmt^*$ \cite{Karaman2011Sampling-based-,janson2015fast}, have existed for some time now and high level task planning is typically done with an informed tree search.
Integrated \tamp, however, is more challenging than its two constituent problems as different task primitives, referred to as \textit{task modes}, often correspond to different configuration spaces of possibly varying dimensionality. 
Transitioning between different task modes requires sampling the boundaries between them with a specialized subroutine that is task dependent. For instance, in the case of manipulation planning, sampling the boundary of modes corresponds to identifying different grasps or placements of an object, which, even for simple cases, is not necessarily smooth (see Fig.~\ref{fig:toyproblem} left). This violates assumptions typically made in sampling-based planning literature~\cite{Karaman2011Sampling-based-,janson2015fast,vega2016asymptotically}.

\begin{figure}[t]
  \vspace{-0.4in}
  \begin{center}
            \includegraphics[height=1.1in]{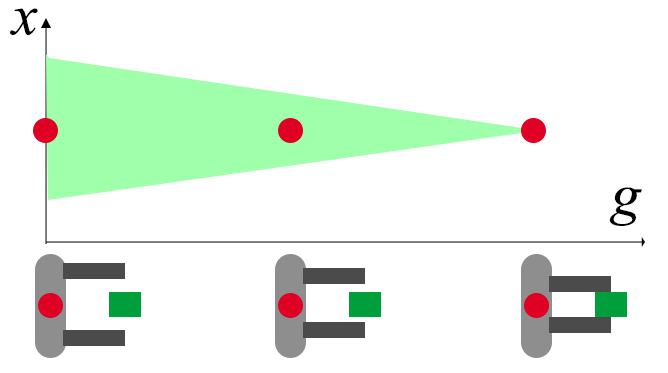}    \hspace{0.3in}
    \includegraphics[height=1.1in]{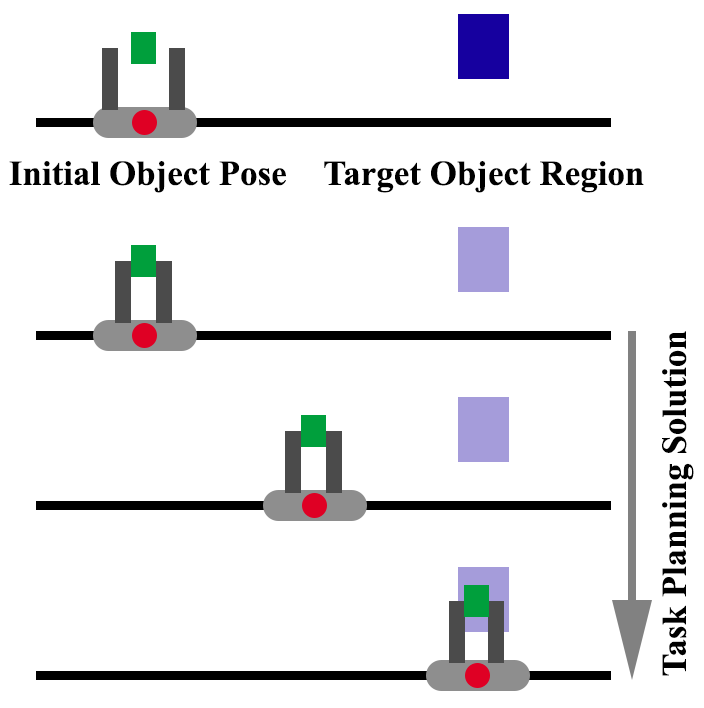}
  \end{center}
    \vspace{-0.3in}
  \caption{(\textit{Left:}) A toy problem with a gripper free to move vertically
    (along $x$), and close its fingers (expressed by the $g$ axis). A
    configuration that touches an object between its fingers lies at
    the non-differentiable boundary, which is the apex of the triangle in the $g$-$x$ graph.
    (\textit{Right:}) A task planning problem involving picking the green object, and placing it in the blue region.
    }
    \vspace{-0.3in}
  \label{fig:toyproblem}
\end{figure}

Early work in manipulation \tamp\ focused on modeling the search space~\cite{alami1990geometrical,koga1994multi}. Recent notable efforts~\cite{hauser2010multi} proposed integrated \tamp\ solutions via multi-modal search strategies, which construct sampling-based roadmaps over individual modes. The key idea constructs an incremental multi-modal \prm\ \cite{Hauser2011Randomized-Multi-Modal-} over a search tree in the space of task modes. \edit{Symbolic planning has also been incorporated into \tamp\ using action predicates~\cite{garrett2018ffrob}, incremental constraints~\cite{dantam2016incremental}, and more recently in an \textit{inside-out} approach by devising a specialized hybrid sampler~\cite{thomason2019unified}.}
The mentioned approaches~\cite{hauser2010multi,Hauser2011Randomized-Multi-Modal-,garrett2018ffrob,thomason2019unified,dantam2016incremental} have been shown to be probabilistically complete, but not necessarily AO.

Other problem instances, such as non-prehensile manipulation~\cite{Dogar:2011ve}, or alternative solution frameworks, such as constrained optimization formulations~\cite{toussaint2015logic}, and hybrid approaches~\cite{havur2014geometric} using answer set programming have also been studied. There are also hierarchical search strategies, which at a low-level call time-budgeted motion planning subroutines that guide the search over viable actions \cite{Kaelbling:2011gb}.
Subsequently, application of such compositional techniques~\cite{schmitt2017optimal,Dobson:2015_MAM,shome2018rearrangement,shome2019anytime}, that perform task planning using sequences of motion plans, were used in domains of manipulation and rearrangement.

\edit{
Of specific interest to the current work is the $\fobt$ algorithm~\cite{vega2016asymptotically}, which was designed for piecewise-analytic task planning domains, using a $\prm^*$-like motion planner. Notably, it argued an \ao\ \tamp solution for a roadmap connection radius \textit{twice} of what was argued for motion planning \cite{Karaman2011Sampling-based-}
A contribution of this work is to demonstrate that previous algorithms~\cite{vega2016asymptotically,Hauser2011Randomized-Multi-Modal-,schmitt2017optimal} can be argued to be \ao\ \textit{without inflating the connection radius}. The key insight is that the \ao\ properties of the motion planning roadmaps can be extended to the \tamp\ domain under a specific set of identified conditions.
}

\edit{
Though not a focus in the current work, the arguments presented here should hold for near-optimal roadmaps~\cite{solovey2018new,solovey2020critical} in the interior of \textit{orbits}, assuming appropriate radius~\cite{solovey2020critical} is chosen for transition connections.
}. 

A standard assumption in the sampling-based planning is the existence of clearance, i.e., minimum distance from obstacles.
This can be violated in task planning, as shown in Fig.~\ref{fig:toyproblem} (left), where a target grasp lies on a non-differentiable boundary point with zero-clearance.
\edit{Another} contribution of the current work is to model what happens in such non-smooth boundaries by extending the definition of robust convergence~\cite{janson2015fast,Karaman2011Sampling-based-}. It is also shown that such countable singular points do not affect the AO properties of integrated task and motion planning.

 
\section{Integrated Task and Motion Planning}
\label{sec:problem_setup}

\edit{This section first defines useful terms and notations, then outlines a general \tamp algorithm $\alg$ that will be analyzed in later sections.}

\noindent{\bf C-space abstraction:} A robotic system is describable by a configuration $\state$ in a $ d $-dimensional
configuration space $ \cspace \subset \mathbb{R}^d $. The robot geometries
exist in a workspace $ \workspace \subset \mathbb{R}^3$, part of which is
occupied by obstacles. This gives rise to an open subset $ \cfree \subset
\cspace $ of the configuration space, which does not result in collisions with
obstacles, and the complement obstacle subset $ \cobs = \cspace / \cfree $. The
boundary of $ \cfree $ is denoted by $ \partial\cfree $, and $ \closedfree $
denotes its closure, i.e., $\cfree$ and its boundary.

\noindent{\bf Paths:} A parameterized continuous curve $ \pi $ in $ \closedfree $ is used to denote
valid paths of the robot as $ \pi:[0,1]\rightarrow\closedfree$. 
A path from $\state_0$ to $\state_1$ in $\closedfree$ is denotes as $\pi_{\state_0\rightarrow \state_1}$
such that $\pi(0) = \state_0$ and $\pi(1) = \state_1$.
Let $ \mathds{P} $ be the set of all valid paths. Then, the path
cost is defined as a mapping $ \cost: \mathds{P} \rightarrow \mathbb{R}^+ $, which
returns a positive measure of a path. The current work considers Euclidean arc-length cost, which is Lipschitz continuous.

\noindent{\bf Modes:} This work adopts the
language of prior work for task planning~\cite{vega2016asymptotically,Hauser2011Randomized-Multi-Modal-} and define a finite set of modes $ \mathfrak{M}
= \{ \mode_0, \mode_1, ... , \mode_k \}$, which correspond to different
operational constraints of the robot for different task components.
The set of all possible configurations within a given mode is denoted as $ \cspace^{\mode_i}$; thus, $ \cspace = \cup_{\mode_i\in\mathfrak{M}} \cspace^{\mode_i}
$. Initially, the discussion will restrict each $ \cspace^{\mode_i} $ to be analytic with smooth boundaries, but this assumption will be later waived in Section 5.

\noindent{\bf Orbits:} Dimensionality reducing constraints force $
\cspace^{\mode_i} \subset \mathbb{R}^{d_{\mode_i}}$ to be a lower dimensional manifold compared to $ \cspace $, i.e., $d_{\mode_i}<d$.
This arises when the task requires the robot to be constrained for some of its degrees of freedom. 
Within these modes, define an orbit $ \orbit_{\mode_i}(x)
$ as a maximal, path-connected, subset of $ \cspace^{\mode_i} $, which contains
configurations $ x \in \cspace^{\mode_i} $. 
Often in manipulation planning, orbits of a mode $\mode_i$ are non-overlapping subsets of a robot's $\cspace^{\mode_i} $ corresponding to a specific grasp for the grasped object and specific placement of the non-grasped objects. 

\noindent{\bf Transitions:} In manipulation planning, the configurations where the robot just grasped or placed an object lie on the border of two modes (two specific orbits of those modes). These configurations at the
intersection of two orbits, are called transition states. Formally, a configuration $ t \in \cfree $ is a transition state if $ t \in
\partial\orbit_{\mode_i} \cap \partial\orbit_{\mode_j} $ for $ i \neq j $.

\noindent{\bf \tamp paths:} For a (non-trivial) task planning problem with a starting configuration of $\startstate$ and goal configuration of $\goalstate$
a feasible solution path will traverse multiple orbits 
over a transition sequence $T = ( \trans_i )_{i=1}^M$ of length $M$.
A feasible \tamp path to a task planning query is denoted as 
$\Pi = \bigoplus_{i=0}^{M} \pi_{\catstate_{i} \rightarrow \catstate_{i+1}},\ \catstate_i \in (\startstate, T, \goalstate)$, 
where $\bigoplus$ denotes path concatenation. The cost $\cost(\Pi)$ of such a path is taken to be the sum
of the costs of paths being concatenated. 
\edit{This formulation implies that traversing a transition has zero (or constant) cost. Such costs are ultimately domain dependent, and cases where the cost function is not locally smooth over transition manifolds are not considered in the current work.
} The optimal path is denoted as $\Pi^*$, and the transition
sequence that it traverses as $T^*$. 
Some additional notation will be used for the analysis of orbital roadmap construction:
Define $ \ball{q}{b} \in \mathbb{R}^d$ as an open $ d
$-dim. hyperball centered at $ q $ with radius $ b $.
Let $ \mu $ denote the measure of the
$\cspace$ space, which corresponds to the Lebesgue measure (generalized
notion of volume). Denote the measure of a ball of radius one as $\oneball$, and
$\iter$ as the number of samples in an orbit.

\subsection{Asymptotic optimality of Sampling-based Algorithms}

This work focuses on algorithms that build roadmaps similar to random geometric graphs ({\tt RGG}) \cite{penrose2003random} via sampling. \edit{It has been shown that sampling-based roadmaps inherit properties of the underlying {\tt RGG}s \cite{solovey2018new}.}

\begin{definition}[Roadmap]
A roadmap is defined as a graph $ \graph_n(V_n, E_n)$, where $ V_n $
corresponds to the $n$ points of a sampling sequence $ \Chi_n$. Edges ${e}(u,v)$
between vertices $u, v \in V_n$ are added in the edge set $E_n$, when:
\begin{myitem}
\item[a)] $\|u,v\| \leq r_n,$ where $ r_n $ is the connection radius of the
	roadmap, and
\item[b)] if the geodesic path connecting $u$ to $v$ is collision free.
\end{myitem}
\label{def:roadmap}
\end{definition}

Traditional roadmap construction~\cite{Kavraki1996Probabilistic-R,Karaman2011Sampling-based-} has focused on the interior of $\cfree$. Given any start $\state_0$ and goal configuration $\state_1$ in the interior of $\cfree$, asymptotic optimality is defined in terms of the optimal path ($\pi^*$) connecting $\state_0 $ to $\state_1$.

\begin{definition}[Asymptotically Optimal Motion Planning on $ \graph_{n} $] An algorithm that builds a roadmap $ \graph_n $ in $\cfree$, and returns the shortest path $
\pi_{\graph_n} $ connecting a start and goal query point lying in the interior of $\cfree$, is asymptotically
optimal \cite{janson2015fast} if:
$\cost(\pi_{\graph_{n}}) \leq (1+\epsilon) \cdot \cost(\pi^*) \
\ \ \forall\ \epsilon>0, as\  n\rightarrow\infty.$

\label{def:motion_ao}
\end{definition}
Prior work~\cite{Karaman2011Sampling-based-,janson2015fast} has provided precise bounds on the radius $r_n$ so that the roadmap $\graph_n$ satisfies this property for all start and goal query points in the interior of $\cfree$. 

\vspace{-0.12in}
\subsection{Algorithmic Outline: Forward Search Tree over Orbital Roadmaps}

Consider a high-level task planning algorithm $ \alg $ that
maintains roadmaps on orbits as well as a high level \textit{forward search tree} representing the
connectivity of orbits through transition states. This could be done explicitly
or implicitly through factorization \cite{Hauser2011Randomized-Multi-Modal-,vega2016asymptotically,garrett2017sample}.
Fig.~\ref{fig:mode_space} depicts such an abstraction of the task planning space with roadmaps constructed inside orbits.
Consider the multi-modal problem where a
robot is tasked to pick-and-place a single object . \edit{The planner begins in $\startstate$ in $\orbit_0$ where $\mode_0$ corresponds to \textit{transit}. Transitions to the adjacent \textit{transfer} mode $\mode_1$ are achieved through sampled grasps to reach $\orbit_1,\orbit_2$. Samples in the interior of $\orbit_0$ connect $\startstate$ to $\trans_1, \trans_2$.   Then, sampled stable positions with the object reach $\orbit_3, \orbit_4$. $\trans_3$ is an arm configuration that achieves the desired object placement with grasp $\trans_1$. When $\goalstate$ is reached in $\orbit_4$ with motions connecting  $\startstate\rightarrow\trans_1\rightarrow\trans_3\rightarrow\goalstate$ the solution can be reported. 
}

\begin{figure}[h]
\vspace{-0.25in}
\centering
\includegraphics[height=1.15in]{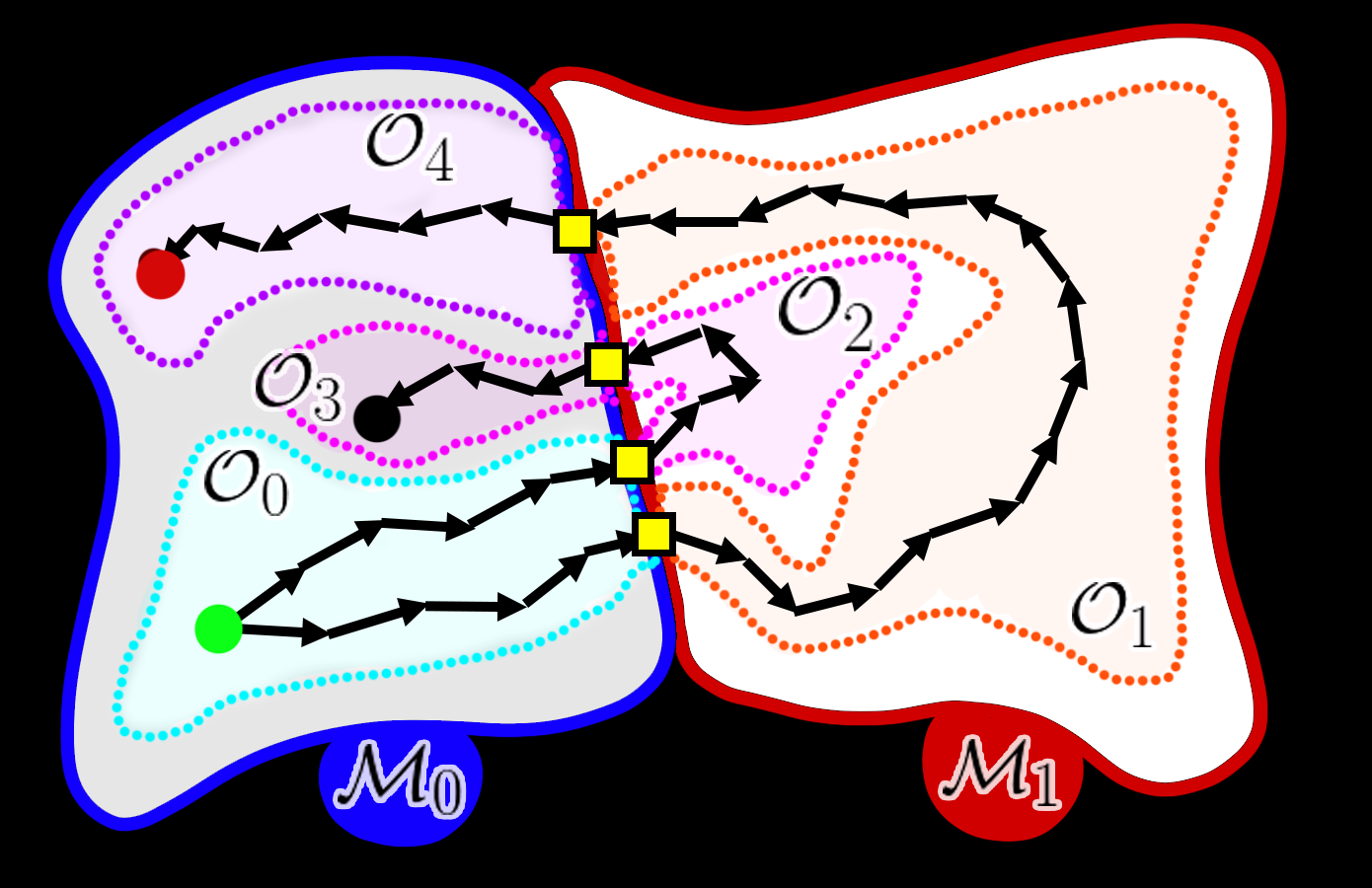}
\includegraphics[height=1.15in]{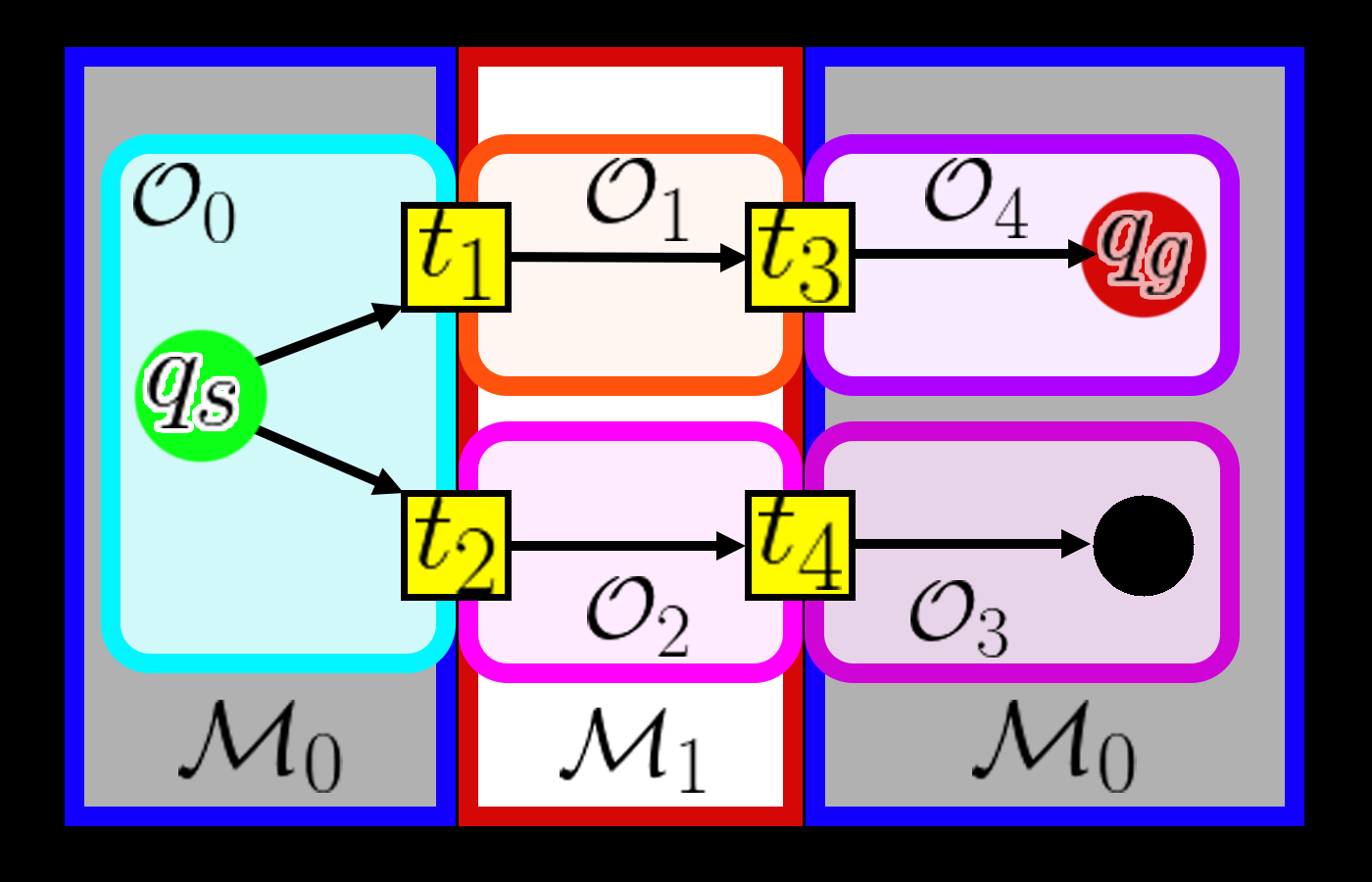}
\vspace{-0.15in}
\caption {
\textit{Left:} A $\cspace$-space split into 2 modes with roadmaps (black) drawn
within orbits connected by transition states (pink). The start and goal
configurations are drawn in green and red respectively.
\textit{Right:} A high-level orbital graph keeping track of the connections
between the roadmaps within orbits (nodes labeled $O_i$) through the transition
states (nodes labeled $t_j$). 
}
\vspace{-0.2in}
\label{fig:mode_space}
\end{figure}

\begin{definition}[Asymptotic Optimality of $\alg$] Algorithm $\alg$, which returns a feasible path $\Pi^\iter$ after $\iter$ iterations, is asymptotically optimal if:
$\cost(\Pi^\iter) \leq (1+\epsilon) \cdot \cost(\Pi^*) \quad \forall \epsilon>0,\text{ as }\iter\rightarrow\infty.$

\label{def:ao}
\end{definition}
\noindent The conditions for $ \alg $ to achieve AO are the following: (i) It can sample transition states over positive measure subsets of mode boundaries; (ii) The orbital roadmap eventually connects every configuration inside an orbit explored by $ \alg $; (iii) The number of sampled transitions $n_t\rightarrow\infty$ as the number of iterations $ \iter\rightarrow\infty $; (iv) For each discovered orbit $\orbit_i$ the size of its roadmap $n_i\rightarrow\infty$ as $ \iter\rightarrow\infty $.

Algorithm~\ref{algo:est} provides a high-level description of $\alg$.
The inputs are the initial configuration $\startstate$, a goal region $G$, and two positive parameters: the number of interior orbit samples $N_m$ and neighboring transitions samples $N_t$ added to the roadmap per iteration from the expanded orbit. A \textit{forward search-tree} $\mathds{T}$ over orbits is built, beginning with the orbit containing $\startstate$. Each iteration, every orbit must have a non-zero probability of being selected (assured by random node selectiion) as $\orbit_{\rm sel}$. The subroutine $\mathtt{expand\_roadmap}$ adds more nodes and edges to an asymptotically optimal roadmap construction algorithm (e.g., \prmstar, or {\tt FMT}$^*$) with $N_m$ more samples. The orbits are not known apriori but $N_t$ new transition points are uniformly selected from the boundaries with all neighboring modes, and connected to the roadmap $\graph_{\orbit_{\rm sel}}$. 
Each of the new (empty) orbits are added to the tree $\mathds{T}$, so that they get a chance to be expanded in the future. $\Pi$ keeps track of the best cost path that reaches the goal region $G$.

\vspace{-0.3in}
\begin{algorithm}[ht]
\caption{{\tt }$\alg (\startstate, G, N_m, N_t $)}
\label{algo:est}
$\Pi\leftarrow\emptyset$;\\
$\mathds{T}(V,E)$;\\
$\mathds{T}.V \leftarrow \mathds{T}.V \cup \orbit(\startstate)$;\\

\For{$\iter$ iters}
{
$\orbit_{\rm sel}\leftarrow \mathtt{uniform\_random}(\mathds{T}.V)$;\\
$\graph_{\orbit_{\rm sel}} \leftarrow \mathtt{expand\_roadmap}(\graph_{\orbit_{\rm sel}}, N_m)   $;\\
$\mathcal{N}_{\orbit_{\rm sel}} \leftarrow \mathtt{uniform\_boundary\_sample}(\orbit_{\rm sel}, N_t)   $;\\
$\graph_{\orbit_{\rm sel}} \leftarrow \mathtt{add\_transitions}(\graph_{\orbit_{\rm sel}}, \mathcal{N}_{\orbit_{\rm sel}})   $;\\
$\mathds{T}.V \leftarrow \mathds{T}.V \cup \mathcal{N}_{\orbit_{\rm sel}}$;\\
$\mathds{T}.E \leftarrow \mathds{T}.E \cup \{ (\orbit_{\rm sel}, \orbit_{\rm neighbor}) \forall \orbit_{\rm neighbor} \in \mathcal{N}_{\orbit_{\rm sel}} \}$;\\
$\Pi \leftarrow \mathtt{retrace\_path}(G)$;\\
}

\textbf{return} $\Pi$;
\end{algorithm}
\vspace{-0.3in}
  
\edit{$\alg$ is  introduced as a general version of} several existing algorithms \edit{that are similar in structure, while differing in the order and exact nature of exploration of the \tamp\ search space. As such, arguments made in Sec.~\ref{sec:ao_thms} should apply to algorithms that maintain the key properties of $\alg$.}
The original Multi-modal PRM-based algorithm~\cite{hauser2010multi} expands every orbit per iteration. So does more recent work~\cite{schmitt2017optimal}. Random-MMP~\cite{Hauser2011Randomized-Multi-Modal-} selects one orbit and samples one neighboring transition per iteration.
More recently $\fobt$~\cite{vega2016asymptotically} samples an orbit per iteration, and if it hasn't been previously explored, sets $N_m = {\Theta}(N), N_t = {\Theta}(N)$ and argues asymptotic optimality when $N\rightarrow \infty$. The arguments made for arguing the AO of $\fobt$ will be summarized in the next section.

\section{Summary of previous results: \fobt}\vspace{-0.1in}

\edit{This section summarizes a previous result showing that \fobt is AO to highlight key assumptions for the new result presented later.}

Previous work proposed the \fobt algorithm~\cite{vega2016asymptotically}. To show asymptotic optimality, this work uses topological tools to build a geometric construction, which traces a robust task planning solution possessing clearance in the interior of each orbit, while maintaining bounded error from the optimal task planning path. 
\fobt uses the following rule for the connection radius of the roadmap in each orbit $\orbit$:
$$\quad  r_{\orbit}(n) >\ \ 4 (1+\frac{1}{d_{\orbit}})^{\frac{1}{d_{\orbit}}} \Big(\frac{\mu(\orbit)}{\oneball}\Big)^{\frac{1}{d_\orbit}} \Big( \frac{\log n}{n} \Big)^{\frac{1}{d_\orbit}},~\refstepcounter{equation}\quad \quad \quad(\theequation) \label{eq:fobtconnection}$$

\noindent where $d_{\orbit}$ is the dimensionality of the orbit,
$\mu$ being the Lebesgue measure or volume, and $\oneball$ is the volume of a unit hyperball. 
\textit{Note that this radius is effectively \textbf{twice} the radius sufficient for \prmstar~\cite{Karaman2011Sampling-based-} for AO motion planning within an orbit.}

\begin{theorem}[\cite{vega2016asymptotically} Theorem 1]
Let $\graph_\iter$ be a geometric graph with $\iter$ vertices constructed using the connection radius in Eq.~\ref{eq:fobtconnection} across the orbits of a task planning space. Let $\cost_\iter^\fobt$ be the minimum cost of a path on $\graph_\iter$ and $\cost^*$ the optimal solution cost, then:
$
    \prob ( \{ \underset{\iter\rightarrow \infty}{\limsup}\quad \cost_\iter^\fobt = \cost^* \} ) = 1.
$
\label{eq:event}
\end{theorem}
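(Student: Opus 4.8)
The plan is to prove the $\limsup$ statement by establishing the two inequalities separately. Since any path realized in $\graph_\iter$ is a feasible \tamp\ path, $\cost_\iter^\fobt \ge \cost^*$ always, so it suffices to show that almost surely $\limsup_{\iter\to\infty}\cost_\iter^\fobt \le \cost^*$; for this I will show that, almost surely, for every $\epsilon>0$ the graph $\graph_\iter$ eventually contains a feasible path of cost at most $(1+\epsilon)\cost^*$, and then intersect the countably many almost-sure events obtained along a sequence $\epsilon\to 0$. Throughout I use the standing assumptions behind Definition~\ref{def:ao}: every orbit visited by $\Pi^*$ is eventually discovered and thereafter receives unboundedly many interior samples, and the number of sampled transitions tends to infinity.

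Fix $\epsilon>0$. The first step is a \emph{robustification} of the optimal solution. Starting from $\Pi^*$, which traverses the orbits visited along the transition sequence $T^*$, I would construct a nearby feasible path $\Pi^\delta$ with: (i) clearance at least some $\delta>0$ from $\partial\cfree$ in the interior of each visited orbit; (ii) each of its transitions lying in the relative interior of a positive-measure patch of the shared boundary $\partial\orbit_i\cap\partial\orbit_{i+1}$ it crosses (here the smoothness of the orbit boundaries enters, guaranteeing that small balls around the crossing point meet the shared boundary in a set of positive measure for the transition sampler); and (iii) $\cost(\Pi^\delta)\le(1+\epsilon/2)\cost^*$, using Lipschitz continuity of arc-length cost together with the finiteness of the mode set. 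This is the topological/geometric construction underpinning the \fobt\ analysis.

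The second step is the random-geometric-graph covering argument applied orbit by orbit, plus the transition handling where the doubled radius is used. For each orbit segment of $\Pi^\delta$ I would lay down a chain of overlapping balls of radius $\rho_\orbit(\iter)=r_\orbit(\iter)/4$ with centers spaced $r_\orbit(\iter)/4$ apart, and also place such a ball centered at each transition point. With the radius of Eq.~\ref{eq:fobtconnection}, each such ball has volume $\mu(\ball{c}{\rho_\orbit(\iter)}) = (1+\tfrac{1}{d_\orbit})\,\mu(\orbit)\,\tfrac{\log \iter}{\iter}$, so the probability that a fixed ball contains none of the $\iter$ orbit samples is at most $\exp\!\big(-(1+\tfrac{1}{d_\orbit})\log \iter\big)=\iter^{-(1+1/d_\orbit)}$; summing over the $O\big((\iter/\log \iter)^{1/d_\orbit}\big)$ balls along the finitely many segments shows the failure probability at stage $\iter$ tends to zero, and the usual care (grouping of balls / passage to a subsequence exactly as in \cite{Karaman2011Sampling-based-,janson2015fast}) upgrades this to the almost-sure statement. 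Two samples in consecutive balls of such a chain are within $3r_\orbit(\iter)/4<r_\orbit(\iter)$, so condition (a) of Definition~\ref{def:roadmap} holds; since $\delta$ is fixed while $r_\orbit(\iter)\to 0$, the geodesics connecting them lie in the $\delta$-tube about $\Pi^\delta$ and are collision-free, so (b) holds, and the chain is a path in $\graph_\iter$.

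At each transition point $\trans_i^\delta$ of $\Pi^\delta$, property (ii) gives a positive-measure patch of the shared boundary inside any ball $\ball{\trans_i^\delta}{\beta}$; since the number of sampled transitions diverges, almost surely some sampled transition $\trans_i\in\ball{\trans_i^\delta}{\beta}$ is eventually produced and connected to the orbital roadmaps on both incident orbits. A sample in the ball adjacent to $\trans_i^\delta$ inside either incident orbit is within $r_\orbit(\iter)/2+\beta$ of $\trans_i$, so choosing $\beta\le r_\orbit(\iter)/2$ makes this at most $r_\orbit(\iter)$ and the connecting edge exists on both sides — this ``reach-in'' slack at every transition is exactly what the factor-two radius buys. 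Concatenating the orbit chains through the sampled transitions yields a feasible path in $\graph_\iter$ whose cost exceeds $\cost(\Pi^\delta)$ by an error controlled by $r_\orbit(\iter)$ and $\beta$, hence vanishing as $\iter\to\infty$ and eventually below $(1+\epsilon)\cost^*$. I expect the main obstacle to be making the robustification rigorous: simultaneously securing interior clearance in every visited orbit, positive-measure boundary patches at every transition, and cost within $(1+\epsilon/2)$ of optimal requires controlling how the perturbation behaves where orbits meet, and it is precisely this point that the later sections of the paper must revisit when the smoothness of the boundaries is dropped.
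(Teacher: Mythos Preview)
The paper does not supply its own proof of this theorem: it is quoted from \cite{vega2016asymptotically}, and the surrounding text only sketches the architecture---decompose each orbit segment of $\Pi^*$ into two transition-adjacent hyperballs plus a strongly $\delta$-clear interior path (Assumptions~\ref{ass:robustsegment},~\ref{ass:boundaryballs}), then union-bound the failure probability into a transition-sampling term and an interior-tracing term (Eq.~\ref{eq:failure_prob}). Your proposal reconstructs exactly this architecture, fleshing it out with the standard $r/4$ ball-chain covering from \cite{Karaman2011Sampling-based-,janson2015fast}, and you correctly locate the role of the doubled radius at the transition ``reach-in'' step; so structurally you are aligned with what the paper summarizes.

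One point deserves tightening. You first invoke the positive-measure boundary patch to guarantee that the transition sampler eventually lands some $\trans_i\in\ball{\trans_i^\delta}{\beta}$, and then write ``choosing $\beta\le r_\orbit(\iter)/2$'' to force the edge to the adjacent interior sample. These two uses of $\beta$ are in tension: the positive-measure hitting argument needs $\beta$ fixed independently of $\iter$, while your edge-existence step wants $\beta$ to shrink with $r_\orbit(\iter)\to 0$. The clean resolution---and the one the paper itself adopts later in Theorem~\ref{thm:pairwiserobust}---is to decouple the two events: sample $\trans_i$ once inside a \emph{fixed} patch, then treat $\trans_i$ as a fixed boundary point and show that a ball of radius $r_\orbit(\iter)/2$ tangent to the smooth boundary at $\trans_i$ eventually contains an interior sample (Eq.~\ref{eq:connect_bound0}--\ref{eq:connect_bound}), which is then within $r_\orbit(\iter)$ of $\trans_i$. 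With the \fobt\ radius this tangent-ball argument goes through with room to spare; indeed the paper's whole point in Section~\ref{sec:ao_thms} is that it already goes through at the undoubled radius, so the factor of two is slack rather than essential.
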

\vspace{-0.1in}
Consider segments of the optimal path $\pi^*_{\trans_i\rightarrow \trans_{i+1}}$, which traverses an orbit between two transition states (Fig~\ref{fig:orbital_connection}) that are a concatenation of: \\
(i) One hyperball whose closure contains the start of the segment. (Assuming smooth boundaries, say $\ball{\state_{t^-}}{\delta}, \such \trans_i \in \overline{\ball{\state_{t^-}}{\delta}}$);\\
(ii) One hyperball whose closure contains the end of the segment. (Assuming smooth boundaries, say $\ball{\state_{t^+}}{\delta}, \such \trans_{i+1} \in \overline{\ball{\state_{t^+}}{\delta}}$);\\
(iii) A strongly $\delta$-clear path (refer to it as: $\pi_{\state_{t^-} \rightarrow \state_{t^+}}$) joining the two hyperballs through the interior of the space, i.e., one that is always some $\delta>0$ away from obstacles.

\begin{assumption}[Construction of Segments]
Similar to motion planning setups~\cite{Karaman2011Sampling-based-,janson2015fast}, the interior of each segment $\pi^*$ is robustly optimal, i.e., there exists a sequence of strongly clear paths, which are homotopically equivalent to optimal segments.
\label{ass:robustsegment}
\end{assumption}

\begin{assumption}[Hyperballs Around Transitions]
The boundaries containing between orbits in neighboring modes
are assumed to be smooth (lower dimensional) manifolds, such that it is possible to describe small enough hyperball regions of radius independent of the algorithm. It is additionally assumed that a \textit{transition sampler} is capable of discovering all such positive volume regions by uniformly sampling the appropriate sub-manifold.
\label{ass:boundaryballs}
\end{assumption}

Restating Equation 20 from \cite{vega2016asymptotically}, the probability of failing to be asymptotically optimal is shown to diminish as follows for any $\epsilon > 0$
\begin{align*}
    &\prob(\cost_\iter^\fobt \geq (1+\epsilon)\cost^*) \leq \\
    &\prob( \{ \text{Failing to sample in the neighborhood of transitions after $n$ samples} \}  ) + \\
    &\prob( \{ \text{Failing to trace bounded error path inside orbit after n samples} \}  ) \numberthis
\label{eq:failure_prob}
\end{align*}

It was shown that both of the probability terms on the right-hand side of the inequality go to $0$ as $\iter\rightarrow\infty$. Given the query points $\startstate$ and $\goalstate$, the output from the algorithm ($\Pi^\iter_\fobt$) after $\iter$ iterations can be described as follows.

{\bf Output of \fobt($\startstate,\goalstate$)}:
\begin{align}\vspace{-0.1in}
    \Pi^\iter_\fobt = \bigoplus_{i=0}^{M} \pi_{\catstate_{i} \rightarrow \catstate_{i+1}} \such \cost^\iter_\fobt = \cost(\Pi^\iter_\fobt) \leq (1+\epsilon)\cost^*\\
    \text{where } \catstate_i \in (\startstate, T^\fobt, \goalstate), \quad T^\fobt = (\trans_i)_{i=1}^M.
    \label{eq:existence}
\end{align}
A chief takeaway from the \fobt algorithm is that discovering a $\trans_i$ is guaranteed by uniformly sampling the boundaries of modes (or orbits). Note that this property is a consequence of the assumptions about the spaces involved, and not a feature of the algorithm \fobt. It suffices to give this boundary sampling subroutine enough attempts, which tend to infinity. 
\section{New arguments in integrated task and motion planning}
\label{sec:ao_thms}

Previous work has provided an analysis that sampling-based planning solutions for \tamp are AO as long as they use a connection radius twice as large as that of AO sampling-based motion planners~\cite{vega2016asymptotically}. 
\edit{This section builds a sequence of arguments to}
show that the same connection radius as that in motion planning is also sufficient for \tamp. 

{\bf Summary of Arguments:} \edit{Theorem 2 demonstrates the existence of a ``robust'' solution to the \tamp problem given the current assumptions. Theorem 3 argues that an optimal \tamp solution must comprise of a sequence of optimal paths between orbit transitions. Theorem 4 argues that an AO roadmap-based motion planner will remain AO even for a query between a start and a goal on the boundary of an orbit. Theorem 5 uses Theorems 3 and 4 to argue that $\alg$ is AO if it can find a robust transition sequence. Theorem 6 details the conditions necessary for a boundary sampler to find such a transition sequence. Finally, bringing it all together, Theorem 7 proves that $\alg$ is AO.}

Define the set of all possible valid finite transition sequences
\begin{align*}
	\settrans = \{ T=(\trans_i)_{i=1}^{M}\ | \ 0<M\leq M_{max} \},
\end{align*}
where each $ \trans_i $ is a transition state, and motion planning in a single orbit can connect to an orbit in a neighboring mode through $ \trans_{i+1} $. $M_{max}$ is assumed to be finite as in previous work~\cite{hauser2010multi,vega2016asymptotically}.  

With a slight abuse of notation, define the cost of this transition sequence $\cost(T)$ as the least-cost task planning solution that can be obtained over $T$ between an implicit start and goal state, traced along piecewise robustly optimal segments (similar to Assumption~\ref{ass:robustsegment}).

\begin{theorem}[Robust Optimality of Task Planning]
For a task planning query with piece-wise analytic constraints, there exists a sequence of hyperballs on the boundaries with radius independent of $n$, such that any transition sequence passing through these regions has bounded error to the optimal cost. There can be multiple such sequences.

More specifically, there exists a non-empty set of transition sequences $ \settrans^+ \subset \settrans$ such that for all small $ \epsilon^+>0 $
$$\cost(T^+)\leq(1+\epsilon^+)c^*, \ \ \forall T^+\in\settrans^+, $$

\edit{
Without loss of generality, set $T^+ = (\trans_i)_{i=1}^{M}$. Then, there exist $M$ balls $\ball{\trans_i}{\theta}$ centered around transitions $\trans_i$ with radius $\theta > 0$ such that for all sequences $T' = (\trans_i')_{i=1}^{M}$ with $\trans_i' \in  \ball{\trans_i}{\theta}$ the cost is similarly bounded as 
$\cost(T')\leq(1+\epsilon^+)c^* .$
}

Note that each hyperball $\ball{\trans_i}{\theta}$ has a dimensionality identical to the submanifold where $\trans_i$ was sampled. 
\label{thm:robust_task}
\end{theorem}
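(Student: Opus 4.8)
The plan is to start from the optimal \tamp{} path $\Pi^*$ with its transition sequence $T^* = (\trans_i^*)_{i=1}^{M^*}$ (so $\cost(\Pi^*) = c^*$), and show that perturbing each transition within a small submanifold-ball around it degrades the achievable cost only continuously, hence can be kept within a $(1+\epsilon^+)$ factor. First I would decompose each orbit segment $\pi^*_{\trans_i^* \rightarrow \trans_{i+1}^*}$ of $\Pi^*$ using the \fobt-style construction recalled before Assumption~\ref{ass:robustsegment} together with robust optimality of segment interiors (Assumption~\ref{ass:robustsegment}): for each $i$ this yields a strongly $\delta$-clear path $\tilde\pi_i$ joined to two hyperballs whose closures contain the endpoint transitions, with total cost within a factor $(1+\epsilon_0)$ of $\cost(\pi^*_{\trans_i^* \rightarrow \trans_{i+1}^*})$ for an arbitrarily small $\epsilon_0>0$. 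Concatenating over the (finitely many) segments gives a piecewise robustly optimal path $\tilde\Pi$ over $T^*$ with $\cost(\tilde\Pi) \le (1+\epsilon_0)c^*$.

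Next I would fix $\theta>0$ small and consider a perturbed transition $\trans_i' \in \ball{\trans_i^*}{\theta}$ constrained to the same boundary submanifold on which $\trans_i^*$ lies; by Assumption~\ref{ass:boundaryballs} this ball (taken inside the submanifold, so of matching dimension) contains a positive-measure region of genuine transition states, i.e.\ points lying in $\partial\orbit_{\mode_i}\cap\partial\orbit_{\mode_j}$. Using the hyperball near $\trans_i^*$ on \emph{each} of the two incident orbits, I would reroute the initial/terminal portions of the two adjacent $\delta$-clear segments so that they begin/end at $\trans_i'$ rather than $\trans_i^*$. The key point is that, because the transitions form a finite set and each boundary is a smooth manifold (Assumption~\ref{ass:boundaryballs}), there is a uniform clearance radius $\delta'>0$ and a uniform modulus $L(\theta)$ with $L(\theta)\to 0$ as $\theta\to 0$, such that for every transition and every incident orbit the rerouted arc stays inside a slightly enlarged copy of the original hyperball (still contained in that orbit's free interior for $\theta$ small, hence collision-free), and adds arc-length at most $L(\theta)$. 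The query endpoints $\startstate,\goalstate$ are fixed, so only transition-adjacent portions of the first and last segments are affected.

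Finally, with $M^*\le M_{max}$ transitions there are at most $2M^*$ rerouted arcs, so the perturbed piecewise path over $T' = (\trans_i')_{i=1}^{M^*}$ has cost at most $(1+\epsilon_0)c^* + 2M^* L(\theta)$. Choosing $\epsilon_0$ and $\theta$ small enough that this is $\le (1+\epsilon^+)c^*$, and recalling that $\cost(T')$ is \emph{by definition} the least cost of a piecewise robustly optimal path over $T'$, we get $\cost(T') \le (1+\epsilon^+)c^*$. Defining $\settrans^+ = \{\, (\trans_i')_{i=1}^{M^*} : \trans_i' \in \ball{\trans_i^*}{\theta}\,\}$ yields a non-empty subset of $\settrans$ (it contains $T^*$) all of whose elements meet the bound, and taking $T^+ = T^*$ with the balls $\ball{\trans_i^*}{\theta}$ gives the ``more specific'' part of the statement, the balls having submanifold dimensionality by construction.

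\textbf{Main obstacle.} The delicate step is the middle one: producing a single modulus $L(\theta)$ and clearance $\delta'>0$ that work simultaneously for all transitions and both orbits incident to each, while certifying that the rerouted detours remain collision-free in each orbit's free interior and that $\trans_i'$ remains a legitimate transition. This is exactly where the smoothness of the transition manifolds (Assumption~\ref{ass:boundaryballs}) is needed — to guarantee, near every transition and on every incident orbit, a clearance ball of radius bounded below independently of the perturbation, through which the perturbed transition can be connected to the original $\delta$-clear path by a short collision-free arc, and to guarantee that each $\ball{\trans_i^*}{\theta}$ still carries a positive-measure set of valid transitions.
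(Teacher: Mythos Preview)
Your proposal is correct, but it takes a substantially more constructive route than the paper. The paper's proof is essentially a two-line invocation of prior work: it observes that the existence of the \fobt{} output $T^{\fobt}$ (Eq.~\ref{eq:existence}) already furnishes a transition sequence whose cost is within $(1+\epsilon)c^*$, so setting $\settrans^+ = \{T^{\fobt}\}$ and appealing to Assumption~\ref{ass:boundaryballs} for the hyperballs finishes the argument. In other words, the paper treats Theorem~\ref{thm:robust_task} as a repackaging of the \fobt{} guarantee rather than something to be derived from first principles.

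You instead build the robust transition sequence directly from $\Pi^*$ and $T^*$: you decompose each orbital segment via Assumption~\ref{ass:robustsegment}, perturb each transition inside its submanifold ball, reroute the adjacent $\delta$-clear arcs, and control the total added cost by a uniform modulus $L(\theta)$ obtained from smoothness and finiteness. This is a genuinely different (and more self-contained) argument. What it buys you is independence from the specific \fobt{} analysis---your proof would stand even without Theorem~1 as a black box---and it makes explicit exactly where Assumption~\ref{ass:boundaryballs} enters (uniform clearance on both incident orbits and positive-measure transition sets). What the paper's approach buys is brevity: since the \fobt{} result is already established in the cited prior work, there is no need to reconstruct the perturbation bound, and the theorem reduces to a corollary. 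Your ``main obstacle'' paragraph correctly identifies the only nontrivial step in your route; the paper sidesteps it entirely by inheriting the bound from \fobt.
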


\begin{proof}
The proof is a direct consequence of Assumption~\ref{ass:boundaryballs}, and the guaranteed existence of $T^\fobt$ as shown in Eq.~\ref{eq:existence}, which means for 
$$ \settrans^+ = \{T^\fobt \}, \quad \epsilon^+ = \epsilon\ \text{(from \fobt)} $$
the existence of $\settrans^+$ is assured. This directly proves the robust optimality of $\Pi^*$ in the described setup for task planning. This additionally guarantees that the property holds for all small $\epsilon^+>0$.\qed
\end{proof}

\begin{wrapfigure}{r}{0.5\textwidth}
	\centering
 	\vspace{-0.37in}
    \includegraphics[width=0.49\textwidth]{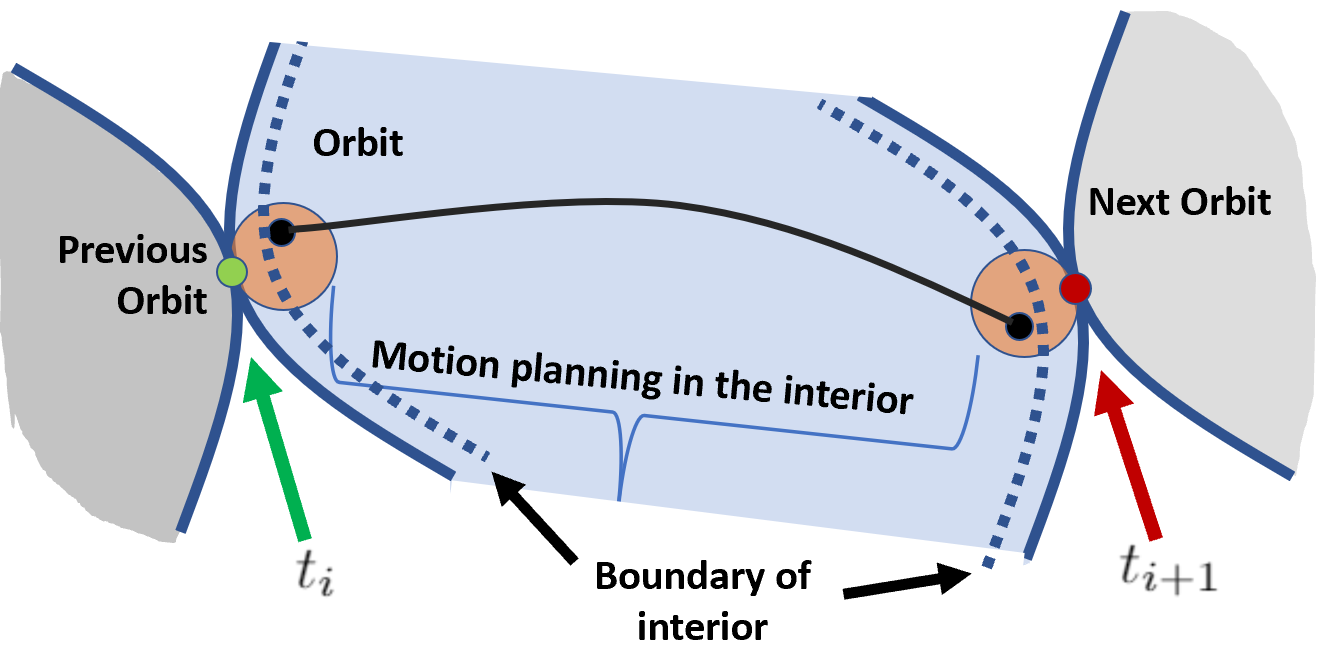}
    \vspace{-0.2in}
    \caption{The image shows the segment of the trajectory inside an orbit, where the problem is decomposed into connecting to the transition points and motion planning in the interior.}
    \vspace{-0.3in}
    \label{fig:orbital_connection}
\end{wrapfigure}
\noindent\textbf{Note:} This is a straightforward extension of the arguments presented in the previous work, and such robustness was inherently assumed therein. 
It should be pointed out that even though the minimal set $\settrans^+ = \{T^\fobt \}$ which essentially lies in the neighborhood of $T^*$ suffices to argue robust optimality, the relaxation of the definition to allow the existence of an arbitrary set $\settrans^+$ captures a lot of situations in task planning where the optimal solution is rarely unique. Consider the problem of rearranging objects $A$ and $B$. It is possible, the two solutions that transfer $A$, then $B$ versus transferring $B$ then $A$, are effectively identical in cost, though drastically different in terms of how the transition sequences look.

\noindent\textbf{Implication:} Theorem~\ref{thm:robust_task} indicates that there are positive volumes (the hyperballs around \textit{desired} transition configurations) in the submanifold on which these transitions exist, and are sampled. This allows the cost of discovered solutions connected through these to have (an arbitrarily small) bounded deviation from the optimal cost.

\begin{theorem}[Pairwise-optimal Planning Over Robust Transition Sequence]
Given an $\epsilon^+$-robust transition sequence $T^+$, a path $\Pi$ constructed from optimal orbital segments traversing $T^+$ maintains the $\epsilon^+$ cost bound.
\end{theorem}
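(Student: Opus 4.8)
The plan is to exploit the additivity of path cost over concatenation together with the observation that fixing the transition sequence $T^+$ decouples a \tamp path into independent per-orbit shortest-path subproblems. Write $T^+ = (\trans_i)_{i=1}^{M}$ and set $\catstate_0 = \startstate$, $\catstate_{M+1} = \goalstate$, and $\catstate_i = \trans_i$ for $1 \le i \le M$. Any feasible path that traverses exactly the transition states of $T^+$ in order has the form $\bigoplus_{i=0}^{M}\pi_{\catstate_i \rightarrow \catstate_{i+1}}$, where the $i$-th segment lies in the single orbit joining $\catstate_i$ and $\catstate_{i+1}$; by the definition of $\cost$ on concatenated paths in Section~\ref{sec:problem_setup}, its cost is $\sum_{i=0}^{M}\cost(\pi_{\catstate_i \rightarrow \catstate_{i+1}})$.

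First I would observe that, because the endpoints $\catstate_i,\catstate_{i+1}$ of each segment are held fixed by $T^+$, the segment costs can be minimized one orbit at a time: replacing the $i$-th segment by the optimal path in its orbit between $\catstate_i$ and $\catstate_{i+1}$ can only decrease that segment's cost and leaves every other segment untouched. Letting $\Pi$ denote the path obtained by taking this optimal orbital segment for every $i$, it follows that $\Pi$ has minimum cost among all feasible paths that traverse $T^+$; in particular $\cost(\Pi) \le \cost(\Pi')$ for any other such path $\Pi'$.

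Next I would invoke the definition of $\cost(T^+)$, namely the least cost of a \tamp solution over $T^+$ traced along piecewise robustly optimal segments (cf. Assumption~\ref{ass:robustsegment}). Such a solution is itself a feasible path traversing $T^+$, since each robustly optimal segment is a valid path in its orbit with the prescribed transition endpoints; hence $\cost(\Pi) \le \cost(T^+)$. Finally, the hypothesis that $T^+$ is $\epsilon^+$-robust — equivalently, Theorem~\ref{thm:robust_task} applied to $T^+ \in \settrans^+$ — gives $\cost(T^+) \le (1+\epsilon^+) c^*$, and chaining the two inequalities yields $\cost(\Pi) \le (1+\epsilon^+) c^*$, which is the claim.

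I do not expect a serious obstacle here; the only points needing care are (a) making explicit that fixing $T^+$ renders the per-orbit optimizations independent, so that the concatenation of per-orbit optima is globally optimal over paths constrained to $T^+$, and (b) checking that the robustly optimal decomposition defining $\cost(T^+)$ is admissible in the comparison, i.e., that a robustly optimal orbital segment is in particular a valid orbital path with the same endpoints. If one prefers not to assume that the per-orbit optima are attained, I would restate everything with infima and use that an infimum distributes over a finite sum, which leaves the argument unchanged. \qed
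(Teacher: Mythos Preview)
Your proposal is correct and follows essentially the same approach as the paper: fix the transition sequence $T^+$, note that the concatenation of per-orbit optimal segments has cost no larger than any feasible path over $T^+$ (in particular the robustly optimal one defining $\cost(T^+)$), and then apply the $\epsilon^+$-robustness bound $\cost(T^+)\le(1+\epsilon^+)c^*$. Your write-up is somewhat more explicit about the decoupling and the infimum caveat, but the argument is the same.
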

\begin{proof}
Let $T^+ \in \settrans^+$ and let $(\startstate, T^+, \goalstate)$ be the sequence of transitions with start and goal configurations concatenated at either end. Let $\pi_{\catstate_i\rightarrow \catstate_{i+1}}$ denote a feasible motion planning solution over a pairwise connection and 
$\pi_{\catstate_i\rightarrow \catstate_{i+1}}^*$ be an optimal connection. 

Let $\Pi^+$ be the path concatenations of $\pi_{\catstate_i\rightarrow \catstate_{i+1}}$ such that $\cost(\Pi^+) \leq \cost(T^+)$. Let $\Pi^*$ be the path concatenations of $\pi_{\catstate_i\rightarrow \catstate_{i+1}}^*$.
The result immediately follows since
\begin{align*}
  \cost( \pi_{\catstate_i\rightarrow \catstate_{i+1}}^*) \leq \cost(\pi_{\catstate_i\rightarrow \catstate_{i+1}}) \quad& \forall i,\ 0 \leq i \leq M\\
  \implies
   \sum_{i=0}^M\ \cost(\pi_{\catstate_i\rightarrow \catstate_{i+1}}^*)
  \leq&\ \cost(T^+) \leq (1+\epsilon^+)c^*
\end{align*}
\textit{This implies that it suffices to reason about the optimality of the pairwise motion planning problems, as long as the set of transitions from $\settrans^+$ are sampled.}\qed
\end{proof}

\begin{theorem}[Pairwise-asymptotically Optimal Planning Between Robust Transition Sequence]
Given $\graph_n$ constructed by an asymptotically optimal roadmap-based planner and solution path $\pi_{\trans_i\rightarrow \trans_{i+1}}^n$ found from $\graph_n$ for two transitions $\trans_i$ to $\trans_{i+1}$, then 
$$ \lim_{n \rightarrow \infty}  \prob( \{ \cost(\pi_{\trans_i\rightarrow \trans_{i+1}}^n) \geq (1+\epsilon)\cost(\pi_{\trans_i\rightarrow \trans_{i+1}}^*) \} ) = 0,\quad \forall \epsilon >0 $$
\label{thm:pairwiserobust}
\vspace{-0.3in}
\end{theorem}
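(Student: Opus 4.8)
I would reduce the query ``$\trans_i$ to $\trans_{i+1}$'' to an ordinary interior motion-planning query on $\graph_n$ (working inside the orbit that contains both transitions, whose interior plays the role of $\cfree$ and whose boundary the role of the obstacle set), for which near-optimality of the roadmap path is available by hypothesis, paying only a premium that vanishes as $n\to\infty$ for attaching the two boundary endpoints. By Assumption~\ref{ass:robustsegment} and the segment construction (i)--(iii) recalled above (which rests on Assumption~\ref{ass:boundaryballs}), write $\pi^*_{\trans_i\to\trans_{i+1}}=\pi_1\oplus\rho\oplus\pi_2$ where $\rho$ is strongly $\delta$-clear between interior points $\state_{t^-},\state_{t^+}$, $\pi_1$ joins $\trans_i$ to $\state_{t^-}$ inside $\overline{\ball{\state_{t^-}}{\delta}}$, and $\pi_2$ joins $\state_{t^+}$ to $\trans_{i+1}$ inside $\overline{\ball{\state_{t^+}}{\delta}}$. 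Since closed balls are convex and Euclidean arc-length cost is minimized by straight segments, replacing $\pi_1,\pi_2$ by the segments $[\trans_i,\state_{t^-}]$ and $[\state_{t^+},\trans_{i+1}]$ only decreases cost and keeps the path valid; so it suffices to produce, with probability $\to 1$, a path in $\graph_n$ of cost at most $(1+\epsilon)\cost(\widehat\rho)$ where $\widehat\rho:=[\trans_i,\state_{t^-}]\oplus\rho\oplus[\state_{t^+},\trans_{i+1}]$ and $\cost(\widehat\rho)\le\cost(\pi^*_{\trans_i\to\trans_{i+1}})$.

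The one genuinely new ingredient is a clearance cone anchored at each transition. For a point $p$ on $[\trans_i,\state_{t^-}]$ at distance $\ell$ from $\trans_i$, colinearity gives $\|p-\state_{t^-}\|=\|\trans_i-\state_{t^-}\|-\ell$, so the clearance of $p$ is at least $\delta-\|p-\state_{t^-}\|=\ell+\big(\delta-\|\trans_i-\state_{t^-}\|\big)\ge\ell$, using $\|\trans_i-\state_{t^-}\|\le\delta$ from $\trans_i\in\overline{\ball{\state_{t^-}}{\delta}}$; hence $\ball{p}{\ell}$ lies in the interior of the orbit, and symmetrically near $\trans_{i+1}$. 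Thus every point of $\widehat\rho$ has clearance at least $\delta$ except within a fixed-radius neighbourhood of a transition, where the clearance is still at least the distance to that transition. I would then run the standard ball-chain argument of \cite{Karaman2011Sampling-based-,janson2015fast,solovey2018new} along $\widehat\rho$ with \emph{the same} radius $r_n$ as for interior planning: waypoints $\trans_i=w_0,\dots,w_m=\trans_{i+1}$ spaced below $r_n$, a covering ball of radius $\Theta(r_n)$ at each $w_j$ with $1\le j\le m-1$, a sample captured in each ball with probability $\to1$ (union bound over $\Theta(1/r_n)$ balls), consecutive captured samples within $r_n$ of one another joined by collision-free geodesics, and $\trans_i,\trans_{i+1}$ already present as vertices of $\graph_n$. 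The covering balls near the transitions lie in the orbit interior because the cone bound gives clearance $\Omega(r_n)$ there, and the same bound (plus $1$-Lipschitzness of the clearance function) keeps the incident geodesics valid; everywhere else the clearance $\ge\delta\gg r_n$ is the usual regime. The resulting roadmap path has cost at most $(1+\epsilon)\cost(\widehat\rho)\le(1+\epsilon)\cost(\pi^*_{\trans_i\to\trans_{i+1}})$, so the shortest roadmap path $\pi^n_{\trans_i\to\trans_{i+1}}$ does as well, giving the claimed limit (the case $\trans_i=\trans_{i+1}$ being trivial).

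The main obstacle is exactly the headline claim that no inflation of the connection radius is needed, in contrast to the factor two of \fobt. The resolution is the cone bound $\mathrm{clearance}(p)\ge\mathrm{dist}(p,\trans_i)$: the boundary attachment's ball-chain sits at distance $\Omega(r_n)$ from the transition, where the available clearance is also $\Omega(r_n)$ --- precisely the regime the interior analysis already operates in, since that analysis only ever consumes clearance of order $r_n$ even though an interior $\delta$-clear path has $\delta\gg r_n$. Consequently no part of the boundary attachment forces a larger radius, and the constant bookkeeping in the union bound is inherited verbatim from the cited interior analysis once the covering balls and geodesics near $\trans_i$ and $\trans_{i+1}$ are certified to lie in the orbit interior, which is what the cone bound does. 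A minor additional point to get right is the splice: letting the straightened near-transition pieces be short and sharing a waypoint with the interior sub-chain so that the three pieces connect through genuine roadmap edges.
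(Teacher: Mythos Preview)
Your proposal is correct but takes a different route from the paper's proof. The paper does not re-run the ball-chain argument along the whole path $\widehat\rho$; instead it treats the interior asymptotic optimality of $\graph_n$ as a black box (Def.~\ref{def:motion_ao}) and reduces the boundary query to an interior query plus two ``attachment'' events. Concretely, the paper places a single ball $\ball{\state_0^n}{r_n/2}$ tangent to the smooth boundary at $\trans_i$ (Assumption~\ref{ass:boundaryballs}), shows $\prob(\ball{\state_0^n}{r_n/2}\cap\Chi_n=\emptyset)\le e^{-\mu_1 n r_n^d/(2^d\mu(\orbit))}\to 0$ whenever $n r_n^d\to\infty$, does the same at $\trans_{i+1}$, and then union-bounds the failure event by these two probabilities plus $\prob(\cost(\pi^n_{\state_0\to\state_1})\ge(1+\epsilon)\cost(\pi^*_{\state_0\to\state_1}))$ for the captured interior samples $\state_0,\state_1$. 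Since the latter term vanishes by the assumed AO of the planner for \emph{any} interior pair, the result follows without touching the ball-chain constants.

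What each approach buys: the paper's reduction is shorter and modular---it never reopens the interior analysis, so any AO roadmap method plugs in directly, and the only new computation is the one-ball hitting probability near each transition. Your approach is more self-contained and, notably, your linear clearance bound $\mathrm{clearance}(p)\ge\mathrm{dist}(p,\trans_i)$ along the straightened stub is exactly the mechanism the paper later invokes (Section~\ref{sec:boundaries}) to handle \emph{non-smooth} boundaries via the cone condition; you are effectively proving the stronger Theorem~\ref{thm:boundary_conn} already here. The price is that you must re-verify the spacing/radius constants of the ball-chain so that the covering balls near $\trans_i$ fit inside the cone (i.e., $\alpha\le\beta$ in your $\Theta(r_n)$ bookkeeping), and you must be explicit that the \emph{number} of near-transition waypoints is $O(1)$ so the union bound is unaffected---both routine, but not quite ``inherited verbatim'' from the interior analysis as you claim.
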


\begin{wrapfigure}{r}{0.32\textwidth}
    \centering
    \vspace{-0.02in}
    \includegraphics[width=0.31\textwidth]{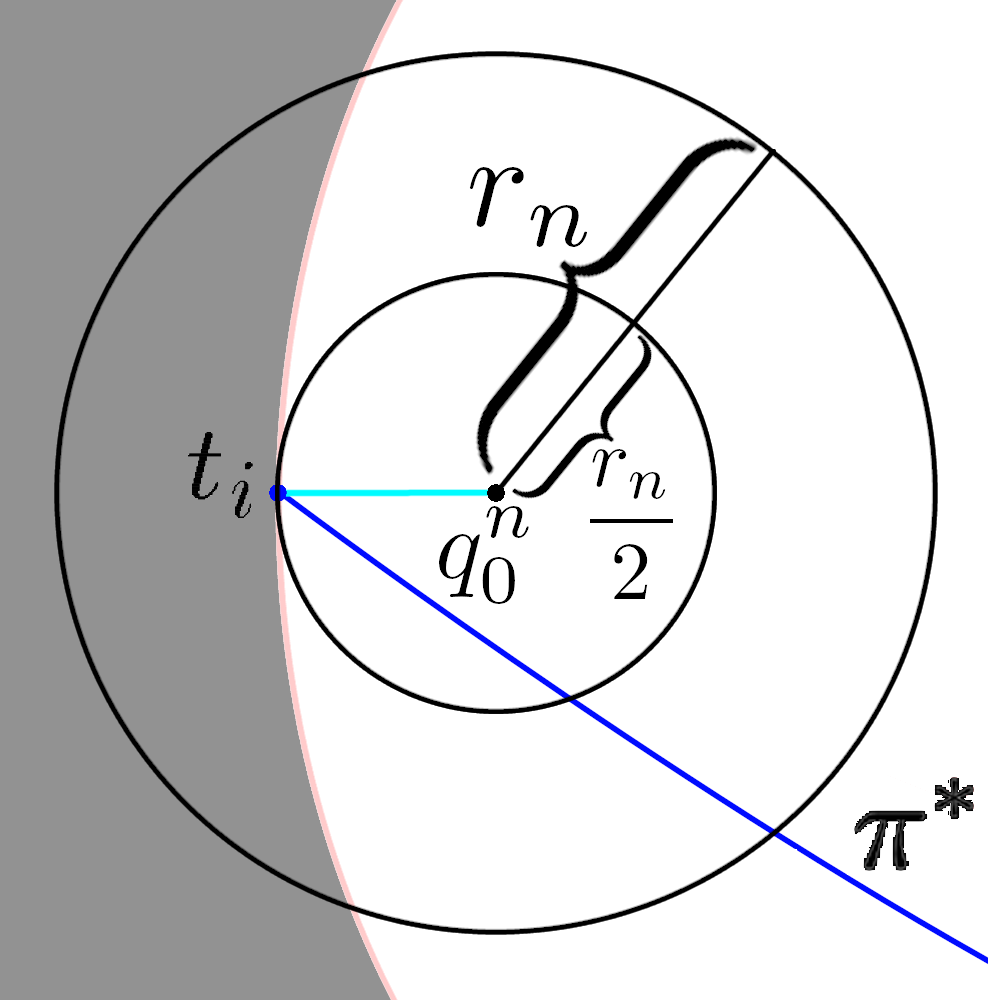}
    \vspace{-0.15in}
    \caption{Construction at smooth transition boundary.
        }
    \vspace{-0.3in}
    \label{fig:smooth}
\end{wrapfigure}
$\empty$

\begin{proof}
This needs some additional consideration since the transition configurations lie on the boundary of the orbit, instead of the interior. Traditional sampling-based roadmap methods (\cite{Karaman2011Sampling-based-,janson2015fast}) guarantee the following event for interior points, say from $\state_0$ to $\state_1$. Restating Def~\ref{def:motion_ao}:
$ \lim_{n \rightarrow \infty}  \prob( \{ \cost(\pi_{\state_0\rightarrow \state_1}^n) \geq (1+\epsilon)\cost(\pi_{\state_0\rightarrow \state_1}^*) \} ) = 0,\quad \forall \epsilon >0. $

Note that $\state_0$, and $\state_1$ are not unique in any way, and the above property essentially holds for any two points in the interior of the space.
Let the connection radius be $r_n>0$.  Given Assumption~\ref{ass:robustsegment} about the segment connecting $\trans_i$ to $\trans_{i+1}$, there is a small enough hyperball that can \textit{touch} the smooth boundary points (Fig.~\ref{fig:smooth}).
Construct balls of radius $\frac{r_n}{2}$. As $n$ increases, $r_n$ decreases and at some point becomes sufficiently small to satisfy Assumption~\ref{ass:robustsegment}. Any point in the interior of such a ball is in the \textit{interior} of the space. Let $\state_0^n$ be the center of such a ball, that gets closer to $\trans_i$ as $n$ grows and  $\frac{r_n}{2}$ shrinks. Any sample in such a ball must be connected to $\trans_i$.

Inspect the event that such a hyperball of radius $\frac{r_n}{2}$ fails to have a sample in sample set $\Chi_n$ for either $\trans_i$ or $\trans_{i+1}$  :
$\{ \ball{\state_0^n}{\frac{r}{2}} \cap \Chi_n = \emptyset  \}$ and $\{ \ball{\state_1^n}{\frac{r}{2}} \cap \Chi_n = \emptyset  \}$.
\begingroup
\allowdisplaybreaks
\begin{align}
    \prob(\{ \ball{\state_0^n}{\frac{r}{2}} \cap \Chi_n = \emptyset  \}) &= \Big(  1 - \frac{\mu(\ball{\state_0^n}{\frac{r}{2}})}{\mu(\orbit)} \Big)^n\label{eq:connect_bound0}\\
    &= \Big( 1 - \frac{\mu_1}{{\mu(\orbit)}} \Big(\frac{r_n}{2}\Big)^d \Big)^n \leq e^{-  \frac{\oneball}{2^d\mu(\orbit)} n r_n^d }\label{eq:connect_bound1}\\
    \implies \lim_{n\rightarrow\infty} \prob(\{ \ball{\state_0^n}{\frac{r}{2}} \cap \Chi_n = \emptyset  \}) &= 0 ,\quad \text{when}\ \ \lim_{n\rightarrow\infty}n r_n^d \rightarrow \infty
    \label{eq:connect_bound}
\end{align}
\endgroup
The same argument holds for $\trans_{i+1}$. It is evident that the connection radii recommended by sampling-based roadmap planners already make this probability go to 0.

Let $\state_0$ be \textit{the} sample that the event $\{ \ball{\state_0^n}{\frac{r}{2}} \cap \Chi_n \neq \emptyset  \}$ generated, and similarly $\state_1$ for $\trans_{i+1}$. The exact configurations do not matter as they are guaranteed to be in the interior of the space.

So combining the failure conditions and by using the union bound it is possible to write:
\begin{align}
\prob( \{ \cost(\pi_{\trans_i\rightarrow \trans_{i+1}}^n) \geq& (1+\epsilon)\cost(\pi_{\trans_i\rightarrow \trans_{i+1}}^*) \} )\\ \leq
\ &\prob(\{ \ball{\state_0^n}{\frac{r}{2}} \cap \Chi_n = \emptyset  \})\\
+ &\prob( \{ \cost(\pi_{\state_0\rightarrow \state_1}^n) \geq (1+\epsilon)\cost(\pi_{\state_0\rightarrow \state_1}^*) \} )\\
+ &\prob(\{ \ball{\state_1^n}{\frac{r}{2}} \cap \Chi_n = \emptyset  \})
\end{align}
Take the limit on both sides. Since all the right hand side terms go to 0, the probability of the event $ \{ \cost(\pi_{\trans_i\rightarrow \trans_{i+1}}^n) \geq (1+\epsilon)\cost(\pi_{\trans_i\rightarrow \trans_{i+1}}^*) \}$ goes to 0 as well, indicating that this event ceases to happen asymptotically.
\textit{Note that the underlying connection radius has not been changed.}\qed
\end{proof}

\begin{theorem}[Pairwise-asymptotically Optimal Planning Converges in Cost]
\label{thm:pairwiseao}
Given $n_i$ samples in each pairwise motion planning problem $\trans_i$ to $\trans_{i+1}$ over the robust transition sequence $T^+\in\settrans^+$, a path $\Pi^\iter$ is generated after $\iter$ total iterations of the algorithm. For all arbitrarily small $\epsilon_{+}>0$, the following holds:
$$ \lim_{\iter\rightarrow\infty} \prob(\{ \cost(\Pi^\iter) > (1+\epsilon_{+})\cost^*\}) = 0,\quad \forall \epsilon_{+}>0$$
\end{theorem}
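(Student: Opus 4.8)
The plan is to reduce the claim to the per-segment guarantees already established plus a finite union bound. First I would fix an arbitrarily small $\epsilon_+>0$ and split it multiplicatively, choosing $\epsilon>0$ and $\epsilon^+>0$ with $(1+\epsilon)(1+\epsilon^+)\le 1+\epsilon_+$. Invoking Theorem~\ref{thm:robust_task} with this $\epsilon^+$ furnishes a non-empty $\settrans^+$; fix any $T^+=(\trans_i)_{i=1}^{M}\in\settrans^+$, with $M\le M_{max}$ finite, so that $\cost(T^+)\le(1+\epsilon^+)\cost^*$ (here $\cost^*=c^*$ is the optimal \tamp cost). Writing $(\catstate_0,\dots,\catstate_{M+1})=(\startstate,\trans_1,\dots,\trans_M,\goalstate)$ for the $M+1$ pairwise queries induced by $T^+$, the preceding theorem (pairwise-optimal planning over a robust transition sequence) gives
\begin{align*}
\sum_{i=0}^{M}\cost(\pi^*_{\catstate_i\rightarrow\catstate_{i+1}})\ \le\ \cost(T^+)\ \le\ (1+\epsilon^+)\cost^*,
\end{align*}
where $\pi^*_{\catstate_i\rightarrow\catstate_{i+1}}$ denotes the optimal in-orbit connection of the $i$-th consecutive pair.

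Next I would observe that, because $\alg$ runs for $\iter\rightarrow\infty$ iterations and picks orbits with $\mathtt{uniform\_random}$, each orbit traversed by $T^+$ has a per-iteration selection probability bounded away from $0$ and is therefore expanded by $\mathtt{expand\_roadmap}$ infinitely often almost surely (a Borel--Cantelli argument; this is condition (iv) in the specification of $\alg$). Hence every per-segment sample count $n_i\rightarrow\infty$ a.s.\ as $\iter\rightarrow\infty$, and by condition (ii) each orbit's roadmap eventually admits a feasible connection for its query, so that for all large $\iter$ the concatenation of the per-segment solutions along $T^+$ is a valid goal-reaching path of $\alg$'s structure. Applying Theorem~\ref{thm:pairwiserobust} to each of the $M+1$ queries $\catstate_i\rightarrow\catstate_{i+1}$ with the common tolerance $\epsilon$ (an endpoint already in the interior, as for $\startstate$ or $\goalstate$, being the degenerate case of that argument), the failure events $F_i^{\iter}=\{\cost(\pi^{n_i}_{\catstate_i\rightarrow\catstate_{i+1}})\ge(1+\epsilon)\cost(\pi^*_{\catstate_i\rightarrow\catstate_{i+1}})\}$ satisfy $\prob(F_i^{\iter})\rightarrow 0$, and — as emphasized there — with the \emph{un-inflated} connection radius.

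Then I would conclude by conditioning on $\bigcap_{i=0}^{M}\overline{F_i^{\iter}}$: concatenating the per-segment roadmap solutions and using that $\Pi^\iter$ (returned by $\mathtt{retrace\_path}(G)$) is no costlier than this concatenation,
\begin{align*}
\cost(\Pi^\iter)\ &\le\ \sum_{i=0}^{M}\cost(\pi^{n_i}_{\catstate_i\rightarrow\catstate_{i+1}})\ \le\ (1+\epsilon)\sum_{i=0}^{M}\cost(\pi^*_{\catstate_i\rightarrow\catstate_{i+1}})\\
&\le\ (1+\epsilon)(1+\epsilon^+)\cost^*\ \le\ (1+\epsilon_+)\cost^*.
\end{align*}
Hence $\{\cost(\Pi^\iter)>(1+\epsilon_+)\cost^*\}\subseteq\bigcup_{i=0}^{M}F_i^{\iter}$, and the finite union bound gives $\prob(\cost(\Pi^\iter)>(1+\epsilon_+)\cost^*)\le\sum_{i=0}^{M}\prob(F_i^{\iter})\rightarrow 0$, which is the statement.

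The step I expect to be the main obstacle is the middle one: making rigorous the link between the abstract robust transition sequence $T^+$ (whose transitions are taken to be sampled by hypothesis; the conditions for actually discovering them are the subject of the next theorem) and the concrete bookkeeping of Algorithm~\ref{algo:est} — namely that every orbit along $T^+$ is expanded infinitely often so that each $n_i\rightarrow\infty$, and that the transitions of $T^+$ link those orbits inside the forward search tree $\mathds{T}$ so that the concatenation is actually a path $\alg$ can return. The remaining ingredients — the multiplicative splitting of $\epsilon_+$, the concatenation inequality, and the union bound over the finite $M$ — are routine once Theorem~\ref{thm:robust_task}, the pairwise-optimal theorem, and Theorem~\ref{thm:pairwiserobust} are in hand.
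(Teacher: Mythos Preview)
Your proposal is correct and follows essentially the same route as the paper: invoke the robustness of $T^+$ (Theorem~\ref{thm:robust_task}), apply the per-segment AO guarantee (Theorem~\ref{thm:pairwiserobust}) to each of the finitely many pairwise queries, note that the algorithmic condition $n_i\rightarrow\infty$ is what lets that guarantee fire, sum the segment costs against $\cost(T^+)\le(1+\epsilon^+)\cost^*$, and finish with a finite union bound. The only cosmetic difference is that the paper allows a separate tolerance $\epsilon_i$ per segment and absorbs them via $(1+\sum_i\epsilon_i)(1+\epsilon^+)\le 1+\epsilon_+$, whereas you use a single common $\epsilon$ and the cleaner multiplicative split $(1+\epsilon)(1+\epsilon^+)\le 1+\epsilon_+$; both are equivalent here.
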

\begin{proof}
Following the result of guaranteed convergence for each pairwise motion planning problem
, it needs to be shown that each segment along $T^+$ gets enough attempts ($n_i$) to allow convergence.
Thus a necessary algorithmic condition
is that for each 
motion plan connecting $\trans_i$ to $\trans_{i+1}$, an AO sampling-based roadmap needs $n_i$ samples such that as $\iter\rightarrow\infty,\ n_i\rightarrow\infty,\ \forall \orbit_i$. 

The rest of the proof follows by combining the pairwise results to compose $\Pi^\iter$
Asymptotically it is known:
\begin{align}
\cost(\Pi^\iter) =& \sum_{i=1}^{M-1} \cost( \pi_{\trans_i\rightarrow \trans_{i+1}}^{n_i} )\leq \sum_{i=1}^{M-1} (1+\epsilon_i)\cost(\pi_{\trans_i\rightarrow \trans_{i+1}}^*)\\
\leq& \Big( 1+ \sum_{i=1}^{M-1}\epsilon_i \Big) \cost(T^+)
\leq \Big( 1+ \sum_{i=1}^{M-1}\epsilon_i \Big) (1+\epsilon^+) \cost^*
\leq (1+\epsilon_{+})\cost^*.
\end{align}
Since each of the \textit{epsilon} terms are arbitrarily small by definition, for all small $\epsilon_+>0$, there would be some small enough values of $\epsilon_1,\ldots\epsilon_{M-1},\epsilon^+$ that satisfy the bound.
Similarly the probability is evidently going to 0 since $M$ is independent of $\iter$ and $n_i$. For the estimated values of $\epsilon_1,\ldots\epsilon_{M-1},\epsilon^+$, the probability follows from union bound
\begin{align}
    \lim_{n\rightarrow\infty}& \prob(\{  \cost(\Pi^\iter) > (1+\epsilon^+) \cost^* \}) \\ 
    &\leq \lim_{n\rightarrow\infty} \sum_{i=1}^{M-1} \prob(\{   \cost( \pi_{\trans_i\rightarrow \trans_{i+1}}^{n_i} ) > (1+\epsilon_i)\cost(\pi_{\trans_i\rightarrow \trans_{i+1}}^*\}) = 0
\end{align}

It follows that in a robustly optimal task planning problem the  solution cost from $\alg$ can get arbitrarily close to the optimal solution cost if along a discovered transition sequence $T^+ \in \settrans^+$ it:
\begin{itemize}
        \item solves every pairwise transition connection over an orbit using an asymptotically optimal sampling-based motion planner \textit{with the standard AO radius sufficient for motion planning}
    \item ensures every orbital roadmap ($n_i$) grows infinitely as the total number of high-level iterations ($\iter$) grows to infinity.  
\end{itemize}
This resolves the second part of Eq.~\ref{eq:failure_prob}. Note that the argument holds over $(\startstate, T^+, \goalstate)$.
\qed
\end{proof}

\begin{theorem}[Sampling Sequence of Transitions]
\label{thm:transsample}
A forward search tree $\mathds{T}$
, which selects an orbit per iteration with probability $\Theta(\frac{1}{| \mathds{T} |})$, and uniformly samples an $N_t$ expected number of neighboring transitions every iteration, is guaranteed to expand a sequence of transitions that are $\epsilon^+$-robust.
\end{theorem}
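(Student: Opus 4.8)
The goal is to show that the forward search tree $\mathds{T}$ almost surely eventually expands an entire $\epsilon^+$-robust transition sequence $T^+ = (\trans_i)_{i=1}^{M}$. By Theorem~\ref{thm:robust_task}, such a $T^+$ exists together with a collection of positive-measure hyperballs $\ball{\trans_i}{\theta}$ on the appropriate mode-boundary submanifolds, such that sampling one transition inside each $\ball{\trans_i}{\theta}$ yields a sequence still bounded by $(1+\epsilon^+)c^*$. So it suffices to prove that, with probability one, the algorithm eventually (a) has each of the relevant orbits $\orbit_0, \orbit_1, \ldots$ in the tree $\mathds{T}$, and (b) has drawn at least one boundary sample inside each $\ball{\trans_i}{\theta}$ while expanding the orbit that borders that ball.

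Let me plan the argument inductively along the sequence. The base case is trivial: $\orbit_0 = \orbit(\startstate)$ is inserted into $\mathds{T}.V$ at initialization (line 3 of Algorithm~\ref{algo:est}). For the inductive step, suppose $\orbit_{i-1}$ (the orbit from which $\trans_i$ is reached) has been in the tree since some finite iteration $n_0$. From that point on, each iteration selects $\orbit_{i-1}$ with probability $\Theta(1/|\mathds{T}|)$; since $|\mathds{T}|$ grows at most linearly in the iteration count (at most $N_t$ new orbits per iteration), this selection probability is at least $c/\iter$ for some constant $c>0$, and $\sum_\iter c/\iter = \infty$. By a Borel--Cantelli argument (second lemma, using independence of the selection draws), $\orbit_{i-1}$ is selected infinitely often almost surely. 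Each time it is selected, $\mathtt{uniform\_boundary\_sample}$ draws in expectation $N_t$ transition points uniformly from the union of boundary submanifolds with neighboring modes; the submanifold containing $\trans_i$ has positive measure, and $\ball{\trans_i}{\theta}$ restricted to it has positive measure $\eta_i > 0$ (this is exactly the content of Theorem~\ref{thm:robust_task} and Assumption~\ref{ass:boundaryballs}). Hence each selection of $\orbit_{i-1}$ has a fixed positive probability $p_i>0$ of placing a sample inside $\ball{\trans_i}{\theta}$ and thereby inserting (a point representing) $\orbit_i$ into $\mathds{T}$ via lines 7--10. Since $\orbit_{i-1}$ is chosen infinitely often, this event occurs almost surely in finite time, giving both the desired transition sample $\trans_i' \in \ball{\trans_i}{\theta}$ and the presence of $\orbit_i$ in the tree. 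Iterating over $i = 1, \ldots, M$ (a finite bound by $M \le M_{max}$) and taking a union over the finitely many almost-sure events, we conclude that almost surely the algorithm expands a complete $\epsilon^+$-robust sequence $T' = (\trans_i')_{i=1}^M$.

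The main obstacle I anticipate is the careful handling of the non-stationary selection probability: because $|\mathds{T}|$ itself is random and growing, the per-iteration probability of re-selecting a fixed orbit is a random, shrinking quantity, so the naive Borel--Cantelli lemma does not directly apply. The fix is to bound $|\mathds{T}| \le 1 + N_t \iter$ deterministically (or in expectation, then via a high-probability envelope), giving a lower bound $\Omega(1/\iter)$ on the selection probability whose sum diverges, and then invoke the conditional/extended Borel--Cantelli lemma (Lévy's extension) for the sequence of selection events, which are conditionally independent given the tree's history. A secondary subtlety is ensuring that ``$\orbit_i$ enters the tree'' genuinely requires only a single good boundary sample from $\orbit_{i-1}$ — this is where one must lean on the algorithm's design (lines 9--10 add every newly discovered neighboring orbit) and on Assumption~\ref{ass:boundaryballs} guaranteeing the transition sampler actually reaches the relevant submanifold. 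Once these two points are pinned down, the rest is a routine finite induction and union bound.
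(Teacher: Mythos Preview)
Your proposal is correct and the argument is sound; the handling of the non-stationary selection probability via the deterministic bound $|\mathds{T}| \le 1 + N_t \iter$ together with L\'evy's conditional Borel--Cantelli is exactly the right fix, and the finite induction over $i=1,\ldots,M$ closes things cleanly.

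The route, however, differs from the paper's. The paper does not carry out the Borel--Cantelli / induction argument directly; instead it observes that once each desired transition $\trans_{i+1}$ is surrounded by a positive-measure region (so that each expansion of $\orbit_i$ hits it with some fixed $\varepsilon'>0$), the forward search tree behaves exactly like the \emph{naive random tree} analyzed in~\cite{li2016asymptotically}, and then simply invokes Theorem~18 of that reference (with the transition probability replaced by $\varepsilon'$) to conclude that the robust sequence is discovered almost surely. In other words, the paper's proof is a one-line reduction to an external result, whereas you have essentially reconstructed, from scratch, the content of that external theorem. What your approach buys is self-containment and transparency about where the $\Theta(1/|\mathds{T}|)$ hypothesis is actually used; what the paper's approach buys is brevity and a clean pointer to a known abstraction. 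Substantively the two arguments coincide: the naive-random-tree theorem in~\cite{li2016asymptotically} is proved by precisely the kind of ``every node is selected infinitely often, and each selection succeeds with fixed positive probability'' reasoning you wrote down.
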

\begin{proof}
Consider orbits to be selected uniformly at random from the working search tree. Let $N_t>0$ be the number of transitions added on each orbit expansion. For each desired pairwise transition $\trans_{i} \rightarrow \trans_{i+1}$ from an orbit $\orbit_i$, robust optimality guarantees the \textit{existence of a positive volume} around $\trans_{i+1}$. The probability of sampling in that volume is a small constant $\varepsilon' = \frac{\mu(\ball{\trans_{i+1}}{\varepsilon})}{\mu_{\cap}} > 0$ which is independent of $N_t$ or $\iter$. Here $\varepsilon$ is some small radius describing the region, and $\mu_{\cap}$ the volume of the submanifold that is being sampled.

This formulation makes the search tree identical in behavior to the \textit{naive random tree} described in previous work~\cite{li2016asymptotically}. Reusing the arguments of \cite{li2016asymptotically}(Theorem 18), by substituting the transition probability with any $\varepsilon'>0$, sampling the correct sequence of transitions from some $T^+$ through the regions defined by Theorem~\ref{thm:robust_task} is guaranteed asymptotically. This resolves the first part of Eq.~\ref{eq:failure_prob}.\qed
\end{proof}

Note that additionally \cite{li2016asymptotically}(Theorem 3) ensures that every orbit will also be expanded infinitely often, guaranteeing all $n_i\rightarrow \infty$ in each orbit when every expansion contributes to a positive expected number ($N_m$) of samples to $n_i$. In our model, $N_t$ simply needs to be positive, and can be a constant by the same argument, to ensure coverage of the boundary submanifold.

\begin{theorem}[Asymptotically Optimal Task Planning]
When applied to a robust task planning problem, solvable by a
finite number of mode transitions, $\alg$ is AO if
\begin{enumerate}
    \item a forward search tree over transition configurations is sampled uniformly at random on the transition submanifold
    \item the number of samples in each orbit and number of neighboring transitions increases asymptotically
    \item the roadmap contained in an orbit uses a connection radius of
    $$ r_n(\orbit) \geq \text{AO motion planning radius in each orbit }\orbit \text{~\cite{Karaman2011Sampling-based-,janson2015fast}} $$
\end{enumerate}
\end{theorem}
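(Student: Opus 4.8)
The plan is to obtain Theorem~7 purely by composing Theorems~\ref{thm:robust_task} through~\ref{thm:transsample}, so that the substantive content is already in place and what remains is to check that each failure term of Eq.~\ref{eq:failure_prob} vanishes under the three hypotheses. I would start by fixing an arbitrarily small $\epsilon>0$. By Theorem~\ref{thm:robust_task} (robust optimality) there is a concrete $\epsilon^+$-robust transition sequence $T^+=(\trans_i)_{i=1}^{M}\in\settrans^+$ with $n$-independent hyperballs $\ball{\trans_i}{\theta}$ on the transition submanifolds, and $M$ is finite since the query is solvable by finitely many mode transitions, $M\le M_{max}$. Because $\alg$ always returns the cheapest goal-reaching path it has assembled so far, it is enough to prove that, with probability tending to $1$ as $\iter\to\infty$, $\alg$ has assembled \emph{some} path of cost at most $(1+\epsilon)\cost^*$; from that iteration on $\cost(\Pi^\iter)\le(1+\epsilon)\cost^*$, which is exactly Definition~\ref{def:ao}.

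The argument then runs along the two failure modes of Eq.~\ref{eq:failure_prob}. For the transition-sampling term: hypothesis~(1), together with the transition-count growth in hypothesis~(2), makes $\mathds{T}$ behave like the naive random tree of~\cite{li2016asymptotically}, so Theorem~\ref{thm:transsample} gives that, with probability tending to $1$, some robust sequence is expanded, and in particular we may take it to be $T^+$: every orbit of $T^+$ is reached, a sample lands in each $\ball{\trans_i}{\theta}$, and by the remark following Theorem~\ref{thm:transsample} each such orbit is expanded infinitely often, so the per-orbit roadmap sizes $n_i$ along $T^+$ tend to infinity (the other half of hypothesis~(2)). For the path-tracing term: conditioned on $T^+$ being expanded, hypothesis~(3) gives each orbital roadmap the standard AO motion-planning connection radius, Theorem~\ref{thm:pairwiserobust} extends the interior AO guarantee to each boundary-to-boundary query $\trans_i\to\trans_{i+1}$ \emph{with that same radius}, and Theorem~\ref{thm:pairwiseao} --- which itself rests on the pairwise-optimal planning result and on Theorems~\ref{thm:robust_task} and~\ref{thm:pairwiserobust} --- sums the $M-1$ pairwise $(1+\epsilon_i)$ factors and the robustness slack $(1+\epsilon^+)$ into $\cost(\Pi^\iter)\le(1+\epsilon)\cost^*$, its failure probability being a finite union (since $M$ is independent of $\iter$) of terms each going to $0$. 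A final union bound closes the proof: $\prob(\cost(\Pi^\iter)\ge(1+\epsilon)\cost^*)$ is at most $\prob(\text{no }T^+\text{ ever expanded})+\prob(T^+\text{ expanded but some segment not yet }\epsilon_i\text{-optimal})$, and both terms tend to $0$.

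The step I expect to carry the weight --- and the reason the connection radius need not be inflated --- is the boundary issue dispatched by Theorem~\ref{thm:pairwiserobust}: each $\trans_i$ lies on $\partial\orbit_i$ with zero clearance, so the interior AO theorems of~\cite{Karaman2011Sampling-based-,janson2015fast} do not literally cover the endpoints $(\trans_i,\trans_{i+1})$. Its resolution is to replace $\trans_i$ by the center of a radius-$r_n/2$ ball tangent to the (smooth) boundary; for large $n$ this ball lies in the orbit interior, is non-empty with probability $\to 1$ by the estimate in Eqs.~\ref{eq:connect_bound0}--\ref{eq:connect_bound}, and every sample in it connects to $\trans_i$, so a genuinely interior pair of queries substitutes for the transitions while $r_n$ is left unchanged. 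Within the proof of Theorem~7 itself the only remaining subtleties are organizational: committing to one particular $T^+$ rather than reasoning over the possibly infinite $\settrans^+$, keeping the union bound finite via $M\le M_{max}$, and chaining the $\epsilon_i$ and $\epsilon^+$ so their combination stays below the prescribed $\epsilon$, which is legitimate because each of them is arbitrarily small by construction.
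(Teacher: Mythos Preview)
Your proposal is correct and follows exactly the approach the paper takes: the paper's own proof of Theorem~7 is the single line ``This follows immediately from Theorem~\ref{thm:pairwiseao} and~\ref{thm:transsample},'' and your write-up is simply a careful unpacking of that sentence, tracing how hypotheses (1)--(3) feed into Theorems~\ref{thm:transsample} and~\ref{thm:pairwiseao} (and, through them, Theorems~\ref{thm:robust_task} and~\ref{thm:pairwiserobust}) to kill both failure terms of Eq.~\ref{eq:failure_prob}. There is no substantive difference; you have just made explicit the organizational steps the paper leaves implicit.
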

\begin{proof}
This follows immediately from Theorem~\ref{thm:pairwiseao} and \ref{thm:transsample}.\qed

\end{proof}

\section{Model for Approaching Boundaries}
\label{sec:boundaries}

The previous sections have relied on the assumption that the boundaries of modes are
smooth (Fig.~\ref{fig:orbital_connection}).
This section shows that a relaxing this assumption to allow start/goal points to lie on a non-smooth boundary
can still result in AO under certain conditions.
The following discussion drops the $ \mode $
superscript notation and refers to $ \cfree \subset \cspace $ as the subspace of
obstacle free configurations (of a single mode).

The typical analysis framework for sampling-based motion planning focuses on
tiling the interior of $ \cfree $ with hyperballs over solution paths.
For smooth boundaries $ \partial\cfree $ it is
possible to tile solution paths with balls that touch the boundary; but, any
irregularity on the boundary will violate this condition. This is readily demonstrated in Fig.~\ref{fig:toyproblem}.

\subsection{Cone Condition}

To argue results for cases where the boundary is not smooth everywhere, this
work borrows certain topological tools for non-smooth boundaries. The proposed
framework still makes some assumptions in terms of the underlying space.

\begin{definition}[Cone]
A ``q-cornet'' \cite{cholaquidis2014poincare} $ \cone{q}{\mathbf{v}}{b}{\phi} $ is the
intersection of a convex cone with apex at $ q $, and a hyperball $ \ball{q}{b}
$ of radius $ b $. The cone is symmetric about vector axis $\mathbf{v}$ and the
``opening'' of the cone is parameterized by $ \phi = \frac{
\mu(\cone{q}{\mathbf{v}}{b}{\phi}) } { \mu(\ball{q}{b})} \in (0,\frac{1}{2}]$.
\end{definition}

With a slight abuse of notation, this work refers to a ``q-cornet'' as a
``cone'', similar to the underlying literature \cite{cholaquidis2014poincare}.

\begin{assumption}[Cone Condition]
For every point $ q\in\partial\cfree $ there exist values $b>0, \phi>0$ and a
vector $\mathbf{v}$ so that there is a cone $ \cone{q}{\mathbf{v}}{b}{\phi} \in \closedfree$.
\label{ass:cone_condition}
\end{assumption}

Note that $\cfree$ automatically satisfies the (Poincar\'e) cone condition
\cite{cholaquidis2014poincare} in its interior since the underlying topology of
the interior contains hyperballs at any configuration.
This assumption is violated in pathological regions, such
as degenerate narrow passages. The following proposition shows that
the cones introduced by Assumption \ref{ass:cone_condition} have a sufficient
intersection with the interior of $\cfree$ to allow sampling
processes to work.

\begin{proposition}[Intersection of Cone and Free Interior]
Given $ q \in \partial\cfree $ and its associated cone $\cone{q}{\mathbf{v}}{b}{\phi}$
from the cone condition, there exists a point $q^{\prime}$ and a small enough
radius $b^{\prime}$ so that $\ball{q^{\prime}}{b^{\prime}} \subset
\cone{q}{\mathbf{v}}{b}{\phi} \cap \cfree,$ i.e., there is a hyperball at the
intersection of the cone and the interior of the free configuration space.
\label{prop:intersection}
\end{proposition}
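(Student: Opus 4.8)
The plan is to exploit two elementary facts: a q-cornet is a \emph{convex} set of positive measure, hence has nonempty topological interior; and $\cfree$ is \emph{open}, so no open ball can sit inside $\closedfree$ while avoiding $\cfree$ entirely.

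\textbf{Step 1: the cone has nonempty interior.} I would first argue that $\cone{q}{\mathbf{v}}{b}{\phi}$, viewed as a subset of $\mathbb{R}^d$, has nonempty interior. Being the intersection of a convex cone with apex $q$ and the hyperball $\ball{q}{b}$, it is convex; and the defining parameter $\phi = \mu(\cone{q}{\mathbf{v}}{b}{\phi})/\mu(\ball{q}{b}) > 0$ forces it to have positive Lebesgue measure. A convex subset of $\mathbb{R}^d$ of positive measure cannot lie in any hyperplane (lower-dimensional sets are null), so its affine hull is all of $\mathbb{R}^d$, and a convex set with full-dimensional affine hull has nonempty interior. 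Concretely, one can take the point $p := q + \tfrac{b}{2}\mathbf{v}$, which lies strictly inside the opening of the cone and strictly inside $\ball{q}{b}$, and pick $\rho>0$ with $\ball{p}{\rho} \subseteq \cone{q}{\mathbf{v}}{b}{\phi}$.

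\textbf{Step 2: this interior ball meets $\cfree$.} Next I would invoke Assumption~\ref{ass:cone_condition}: $\cone{q}{\mathbf{v}}{b}{\phi} \subseteq \closedfree = \overline{\cfree}$, so $\ball{p}{\rho} \subseteq \overline{\cfree}$. I claim $\ball{p}{\rho} \cap \cfree \neq \emptyset$. Otherwise $\ball{p}{\rho}$ would be a nonempty open set contained in $\overline{\cfree}$ and disjoint from $\cfree$; but any $x \in \ball{p}{\rho} \subseteq \overline{\cfree}$ is the limit of a sequence in $\cfree$, which must eventually enter the open neighborhood $\ball{p}{\rho}$ of $x$, contradicting disjointness. (Equivalently: since $\cfree$ is open, $\partial\cfree$ has empty interior, so it cannot contain $\ball{p}{\rho}$.)

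\textbf{Step 3: extract the ball.} Finally, $\ball{p}{\rho} \cap \cfree$ is open (intersection of two open sets) and nonempty, so I choose $q' \in \ball{p}{\rho} \cap \cfree$ and $b' > 0$ with $\ball{q'}{b'} \subseteq \ball{p}{\rho} \cap \cfree \subseteq \cone{q}{\mathbf{v}}{b}{\phi} \cap \cfree$, which is exactly the assertion. The only delicate point, and the one I would be most careful to get right, is Step 1: arguing that the cone has nonempty \emph{interior} rather than merely positive measure (a set can have positive measure yet empty interior), for which convexity of the q-cornet is the essential ingredient. Everything downstream is routine point-set topology relying only on $\cfree$ being open and the cone condition.
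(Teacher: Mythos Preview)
Your proof is correct and follows essentially the same route as the paper: positive measure of the cone, thinness of the boundary, then extract a ball from the intersection. The paper's three-line argument invokes $\mu(\partial\cfree)=0$ and then asserts the positive-measure intersection contains a ball ``given the underlying topology,'' whereas you are more careful---explicitly using convexity of the q-cornet to pass from positive measure to nonempty interior (rightly flagging that positive measure alone does not suffice), and using openness of $\cfree$ rather than $\mu(\partial\cfree)=0$ to guarantee the interior ball meets $\cfree$; this makes your write-up more rigorous but not structurally different.
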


\begin{proof}
Since $ \phi>0 $, $\exists\ \cone{q}{\mathbf{v}}{b}{\phi}\subset\closedfree$ so that
$\mu(\cone{q}{\mathbf{v}}{b}{\phi}) > 0$. This implies $ \mu(\cone{q}{\mathbf{v}}{b}{\phi} \cap
\cfree) > 0 $, since $ \mu(\partial\cfree) = 0 $. Given the underlying topology
of the space, the positive measure intersection $\cone{q}{\mathbf{v}}{b}{\phi} \cap
\cfree$ can contain a small enough hyperball $\ball{q'}{b'} \subset
\{\cone{q}{\mathbf{v}}{b}{\phi} \cap \cfree\}$ for $b'>0$.
\end{proof}

Let $\vartheta_{q}$ be the supremum radius $b'$ of a hyperball at the
intersection of the cone $\cone{q}{\mathbf{v}}{b}{\phi}$ and $\cfree$. This maximum
radius can also be defined for the start $\qstart$ and goal $\qtarget$ query
points as $\vartheta_{\qstart}$ and $\vartheta_{\qtarget}$, respectively.

\begin{figure}[t]
	\centering
	\includegraphics[width=0.8\textwidth]{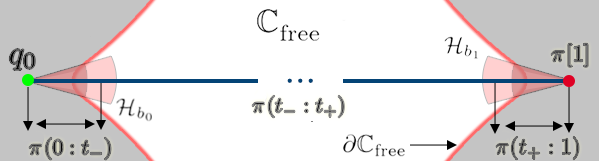}

	\caption
	{
	The figure describes the cone condition for boundary paths showing
	cones at the boundary points, and an intersection with the interior of
	the space.
	}
	\label{fig:conecondition}\vspace{-0.2in}
\end{figure}

\subsection{Robust Clearance for Boundary Paths}

The solution paths for the problems considered by this work must connect points
that lie on the boundary $\partial\cfree$. Such paths will be referred to as
``boundary paths'', as shown in Figure \ref{fig:conecondition}.
\edit{The remainder of this section formally defines boundary paths and extends the notion of $ \delta $-clearance to such paths. Finally, it shows that a motion planning problem that meets this extended notion of clearance for start and goal points can be solved using \ao\ roadmaps.}

\begin{definition}[Boundary paths]
For a boundary path $\pi$ it is true that $\pi[0] \in \partial\cfree $ or
$\pi[1] \in \partial\cfree $.
\end{definition}

Typically, certain clearance properties need to be satisfied for solution paths
in order for sampling-based planners to be able to discover them.
Consider a sequence of subspaces
-- parameterized by a decreasing $\delta > 0$, which approach the entire $\cfree $.

\begin{definition}[$\delta$-interior space]
Given some $ \delta>0 $, the $ \delta $-interior space $ \deltaint \subset
\cfree$ consists of all configurations at least $ \delta $ distance away from $
\partial\cfree$.
\end{definition}

The benefit of the cone condition and resulting proposition is that boundary
paths can be decomposed into three segments: (a) one that passes through the
cone defined at $\pi[0] = \qstart$, (b) a second segment that transitions into
the interior of $ \cfree $, and (c) a third segment that connects to the cone
defined at $\pi[1] = \qtarget$. Given this idea, the notion of ``strong $
\delta $-clearance for boundary paths'' is introduced.

\begin{definition} [Strong $ \delta $-clearance for Boundary Paths]
A boundary path $ \pi $ satisfies ``strong $ \delta $-clearance for boundary
paths'' for some $ \delta>0 $, if there are path parametrizations $0 \leq t_{-}
< t_{+} \leq 1$ for the path, so that:

\begin{itemize}[label=$-$]

\item the subset of the path $\pi(0:t_{-})$ from $\qstart =
	\pi[0]\in\partial\cfree $ to $ \pi[t_-] \in \deltaint$ lies entirely in
	some $ \cone{\qstart}{\mathbf{v}_0}{\hat{b_0}}{\phi_0} \subset \closedfree$ for
	$\hat{b_0}, \phi_0 > 0$.

\item the subset of the path $\pi(t_{-}:t_{+})$ lies in the $ \deltaint$

\item the subset of the path $\pi(t_{+}:1)$ from $ \pi[t_+] \in \deltaint$ to
	$\qtarget = \pi[1]\in\partial\cfree $ lies entirely in some $
	\cone{\qtarget}{\mathbf{v}_1}{\hat{b_1}}{\phi_1} \subset \closedfree,$ for
	$\hat{b_1},\phi_1 > 0$.

\end{itemize}
\end{definition}

The construction of strongly $ \delta $-clear boundary paths is feasible by
Proposition \ref{prop:intersection} for some $ 0<\delta\leq
\frac{1}{2}\min(\vartheta_{\qstart},\vartheta_{\qtarget}) $, when $ \qstart $
and $ \qtarget $ are connected through $ \cfree $. In general, any range of $
\delta $ where such a construction is possible can be considered. A view of
such a path at one of the ends is shown in Figure \ref{fig:pi_epsilon} (right).

Note that the optimal boundary path $\pi^* $ for a motion planning problem may
approach arbitrarily close to obstacle boundaries and hence violate strong $
\delta$-clearance conditions for boundary paths. In order to model regions of
space in the vicinity of $ \pi^* $, which contain ``near-optimal'' paths with a
cost that gets arbitrarily close to $ \cost(\pi^*) $, the notion of $
\delta_\epsilon $-clear convergence 
from prior work
\cite{janson2015fast} is adopted here.
$\pi^*$ has to exhibit some ``weak clearance'' and allow
the existence of a sequence of strong $\delta$-clearance boundary paths that
converge to it.

\begin{figure}[t]
	\centering
		\includegraphics[height=1.2in]{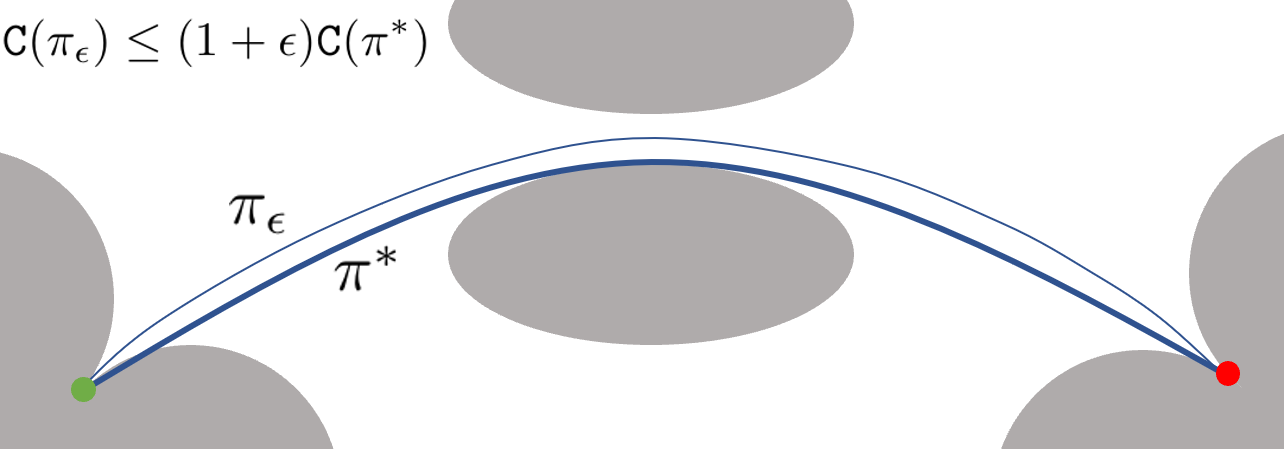}
	\includegraphics[height=1.2in]{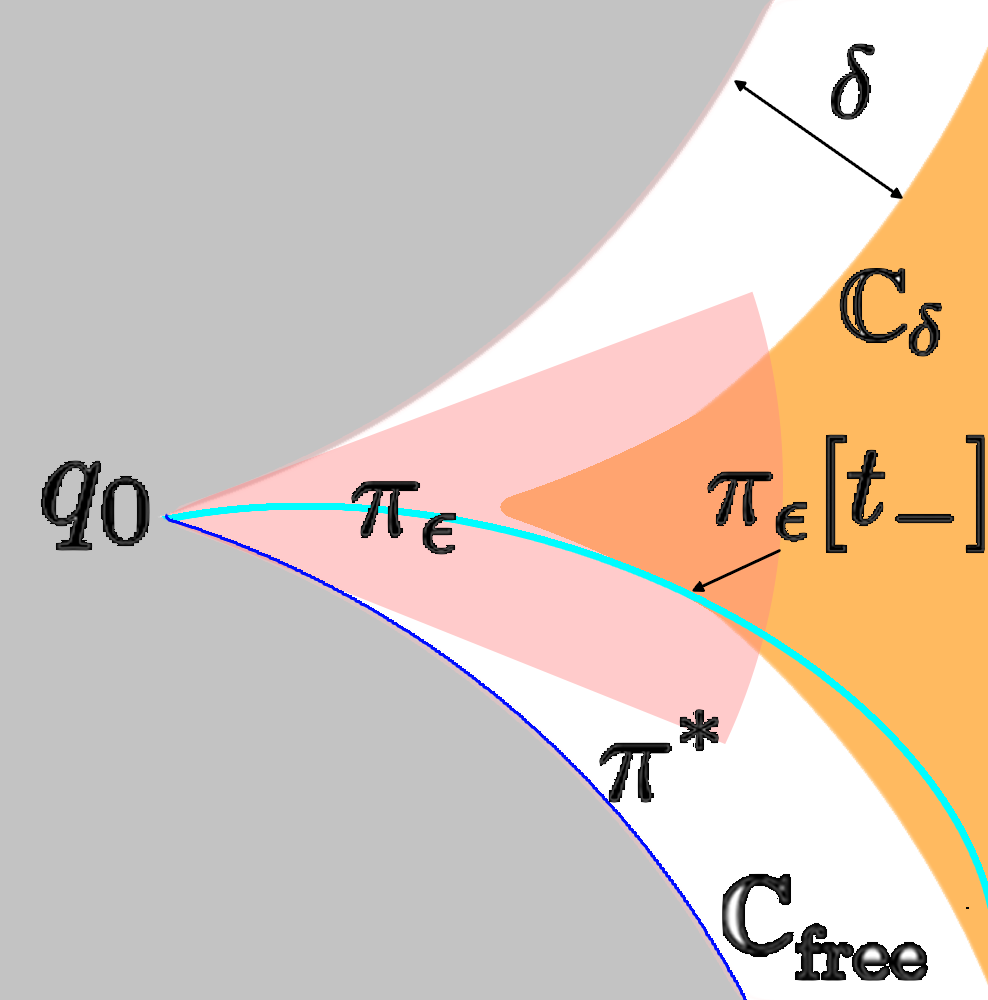}
			\caption { (\textit{Left:}) Strong $ \delta_\epsilon$-clearance Convergence, and (\textit{Right:}) Strong $ \delta $ convergence of some $ \pi_\epsilon $ w.r.t. a $ \pi^*
	$ that traces the boundary.  }
	\label{fig:pi_epsilon}
\vspace{-0.2in}
\end{figure}

\begin{definition}[Strong $ \delta_\epsilon$-clearance Convergence]
A motion planning problem exhibits ``strong $\delta_\epsilon$-clearance
convergence'' for boundary paths, if for all small $ \epsilon>0 $, there exists
some range of clearance values $\delta \in (0, \delta_\epsilon]$, such that a
``strong $ \delta $-clear boundary path'' $ \pi_\epsilon $ with
$\pi_\epsilon[0] = \pi^*[0]$ and $\pi_\epsilon[1] = \pi^*[1]$, has its cost
bounded relative to the cost of the optimal path $ \pi^* $ as follows: $ \cost(\pi_\epsilon) \leq (1+{\epsilon}) \cost(\pi^*) $.

\label{def:clearconv}
\end{definition}

\begin{assumption}
Assume the motion planning problem under inspection exhibits ``strong
$\delta_\epsilon$-clearance convergence'' (as shown in
Fig.~\ref{fig:pi_epsilon})\label{ass:strongdelta}.
\end{assumption}

\begin{theorem}[AO Connection Radius Is Sufficient for Non-smooth Boundaries]
In the case where $ \qstart,\qtarget \in \partial\cfree$, the connection radius that guarantees asymptotic optimality of the interior of the space is sufficient for connecting $\qstart$ to $\qtarget$.
\label{thm:boundary_conn}
\end{theorem}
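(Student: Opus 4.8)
The plan is to mirror the proof of Theorem~\ref{thm:pairwiserobust}, replacing the ball of radius $\tfrac{r_n}{2}$ that it seats tangentially against a \emph{smooth} boundary point with a ball of radius $\Theta(r_n)$ driven into the interior through the \emph{cone} that Assumption~\ref{ass:cone_condition} supplies at $\qstart$ (and symmetrically at $\qtarget$). First I would invoke Assumption~\ref{ass:strongdelta}: fix $\epsilon>0$ and obtain a value $\delta\in(0,\delta_\epsilon]$ and a strongly $\delta$-clear boundary path $\pi_\epsilon$ with $\cost(\pi_\epsilon)\le(1+\epsilon)\cost(\pi^*)$, whose restriction to $[0,t_-]$ stays in a cone $\cone{\qstart}{\mathbf v_0}{\hat b_0}{\phi_0}\subset\closedfree$, whose restriction to $[t_-,t_+]$ stays in $\deltaint$, and whose restriction to $[t_+,1]$ stays in a symmetric cone at $\qtarget$. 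The middle piece $\pi_\epsilon(t_-{:}t_+)$ is an ordinary strongly $\delta$-clear interior path, so the standard-radius roadmap a.a.s.\ contains a path tracing it up to a $(1+\epsilon)$ factor --- this is precisely the ball-covering argument behind Definition~\ref{def:motion_ao} and \cite{Karaman2011Sampling-based-,janson2015fast}. Only the two boundary endpoints need new treatment, and symmetry lets me focus on $\qstart$.

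For $\qstart$ I would replace $\pi_\epsilon(0{:}t_-)$ by the straight chord from $\qstart$ to $\pi_\epsilon[t_-]$, taking the exit point $\pi_\epsilon[t_-]$ inside the open cone (detouring $\pi_\epsilon$ by an $o(1)$ amount if it only grazes the cone) so that an inscribed ball $\ball{\pi_\epsilon[t_-]}{r^\flat}\subset\cone{\qstart}{\mathbf v_0}{\hat b_0}{\phi_0}$ exists with $r^\flat>0$ fixed. By convexity of the cone the convex hull of $\{\qstart\}\cup\ball{\pi_\epsilon[t_-]}{r^\flat}$ lies in the cone, hence in $\closedfree$, hence (open balls inside $\closedfree$ are interior points, via Proposition~\ref{prop:intersection} and $\mu(\partial\cfree)=0$) the point at fraction $t$ along the chord has clearance at least $t\,r^\flat/\lVert\pi_\epsilon[t_-]-\qstart\rVert$, and in particular clearance $\Theta(r_n)$ at the fixed fraction $t=\Theta(r_n)$. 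Place an ``apex ball'' there, of radius $\Theta(r_n)$, lying inside the cone and within $\ball{\qstart}{r_n}$; any sample it contains connects to $\qstart$ by a single legal edge, since the segment to $\qstart$ stays in the convex cone and has length $\le r_n$. The rest of the chord, where clearance already exceeds $\Theta(r_n)$, is covered by $\Theta\big(\lVert\pi_\epsilon[t_-]-\qstart\rVert/r_n\big)$ ordinary covering balls, exactly as the standard analysis covers any path segment. The probability that $\Chi_n$ misses the apex ball is at most $e^{-\Theta(1)\,n r_n^d}\to 0$, the same estimate as Eq.~\eqref{eq:connect_bound1}, and the miss probability for the chord-plus-middle covering is the standard one, which the calibrated AO radius already drives to $0$; a union bound over the apex balls at $\qstart,\qtarget$, the covering balls, and the single interior-trace failure event closes the probabilistic part.

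On the complement of that event the roadmap contains a path $\qstart\leadsto(\text{apex ball})\leadsto(\text{chord} + \text{trace of }\pi_\epsilon(t_-{:}t_+))\leadsto(\text{apex ball})\leadsto\qtarget$ of cost at most $\lVert\pi_\epsilon[t_-]-\qstart\rVert+(1+\epsilon)\cost(\pi_\epsilon(t_-{:}t_+))+\lVert\qtarget-\pi_\epsilon[t_+]\rVert$ plus a covering slack, and the two chord contributions are $O(\hat b_0)$ because the chords live inside $\ball{\qstart}{\hat b_0}$ and $\ball{\qtarget}{\hat b_1}$. Since Assumption~\ref{ass:cone_condition} is inherited by every sub-cone, $\hat b_0,\hat b_1$ (and $\delta$) may be shrunk freely and independently of $n$, so these contributions become negligible relative to $\cost(\pi^*)$; thus the roadmap path costs at most $(1+\epsilon)\cost(\pi_\epsilon)+o(1)\le(1+\epsilon)^2\cost(\pi^*)+o(1)$, and letting $n\to\infty$ then $\epsilon\to 0$ yields $\lim_{n\to\infty}\prob\big(\cost(\pi_{\qstart\to\qtarget}^n)\ge(1+\epsilon')\cost(\pi^*)\big)=0$ for every $\epsilon'>0$, with the connection radius never enlarged --- which is the claim. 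I expect the main obstacle to be exactly the apex: making rigorous that a region of \emph{vanishing} clearance $\Theta(r_n)$ can still be ``caught'' by the sampler and wired to the boundary point with one legal edge, and checking that the extra covering this forces (the apex ball plus the chord) neither overwhelms the union bound nor inflates the cost --- which is what pins the argument to the cone condition and to shrinking $\hat b_0$ and $\delta$ rather than coupling them to $n$.
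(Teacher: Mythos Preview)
Your proposal is correct and shares the paper's core idea: replace the tangent ball of Theorem~\ref{thm:pairwiserobust} by a region of volume $\Theta(r_n^d)$ inside the cone at the boundary point, then observe that $n r_n^d\to\infty$ already drives the miss probability to zero under the standard AO radius.

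The paper's execution is considerably more terse than yours. It skips the inscribed-ball and chord-covering construction entirely and works with the cone itself: set the cone radius to $r_n$, note that $\mu\big(\cone{\trans_i}{\mathbf v}{r_n}{\phi}\big)=\phi\,\oneball\,r_n^d$, and bound the miss probability by $e^{-\phi\oneball n r_n^d/\mu(\cfree)}\to 0$. Any sample in this cone is within $r_n$ of $\trans_i$ and the connecting segment lies in the convex cone, so one legal edge suffices; the paper then defers the interior connection to the AO roadmap exactly as in Theorem~\ref{thm:pairwiserobust}. Your apex-ball-plus-chord decomposition recovers the same $\Theta(r_n^d)$ volume (with a different constant) and buys you an explicit, clearance-graded bridge from the boundary sample all the way to $\pi_\epsilon[t_-]$, which makes the handoff to the interior argument cleaner than the paper's somewhat informal ``the exact configurations do not matter as they are in the interior.'' The cost is more moving parts in the union bound and the cost accounting; the paper trades that care for brevity.
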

\begin{proof}
The arguments are identical to the proof of Theorem~\ref{thm:pairwiserobust}. The probability of failing to connect these boundary points can be split into the probability of failing to connect each boundary point to the interior of the space, and the probability of failing to connect the two interior points with bounded error. 
The only difference occurs in Eq.~\ref{eq:connect_bound0}-\ref{eq:connect_bound}, where the ball is replaced by the cone such that the volume now decreases by a constant fraction $\phi$. Recall that $\Chi_n$ is a set of $n$ samples and $\oneball$ denotes the volume of a ball of radius one.
\vspace{-0.2in}
\begin{align}
    \prob(\{ \cone{\trans_i}{\mathbf{v}}{\phi}{r_n} \cap \Chi_n = \emptyset  \}) =& \Big(  1 - \frac{\mu(\cone{\trans_i}{\mathbf{v}}{\phi}{r_n})}{\mu(\cfree)} \Big)^n \\
    = \Big( 1 - \frac{\phi\oneball r_n^d}{\mu(\cfree)} \Big)^n \leq& \ e^{-  \frac{\phi\oneball}{\mu(\cfree)} n r_n^d }\\
    \implies \lim_{n\rightarrow\infty} \prob(\{ \cone{\trans_i}{\mathbf{v}}{\phi}{r_n} \cap \Chi_n = \emptyset  \}) &= 0 ,\text{when}\ \ \lim_{n\rightarrow\infty}n r_n^d \rightarrow \infty 
    \label{eq:cone_bound}
\end{align}
This bound is still readily satisfied by the connection radius argued in asymptotically optimal roadmap-based methods(\cite{Karaman2011Sampling-based-,janson2015fast}). It follows from the other arguments that motion planning inside an orbit, and task planning across a robustly optimal sequence of transitions both converge to the optimal cost asymptotically.\qed
\end{proof}

An instance of such a motion planning problem for increasingly narrow \textit{cones} is demonstrated in Fig.~\ref{fig:coneplanning}. Note that in the context of task planning the non-smoothness of the boundaries necessitate a stronger assumption about the \textit{transition sampler} in task planning. The transition sampler needs to be aware of the precise submanifold to sample, which might no longer be the dimensionality of the boundary, since the apex of the cones might lie on a lower dimensional space. This is reasonable in practice. For instance, a grasp sampler for a parallel gripper will sample transitions that constrain the alignment of the object between the fingers.

\begin{figure}
\vspace{-0.2in}
    \centering
    \includegraphics[width=0.25\textwidth]{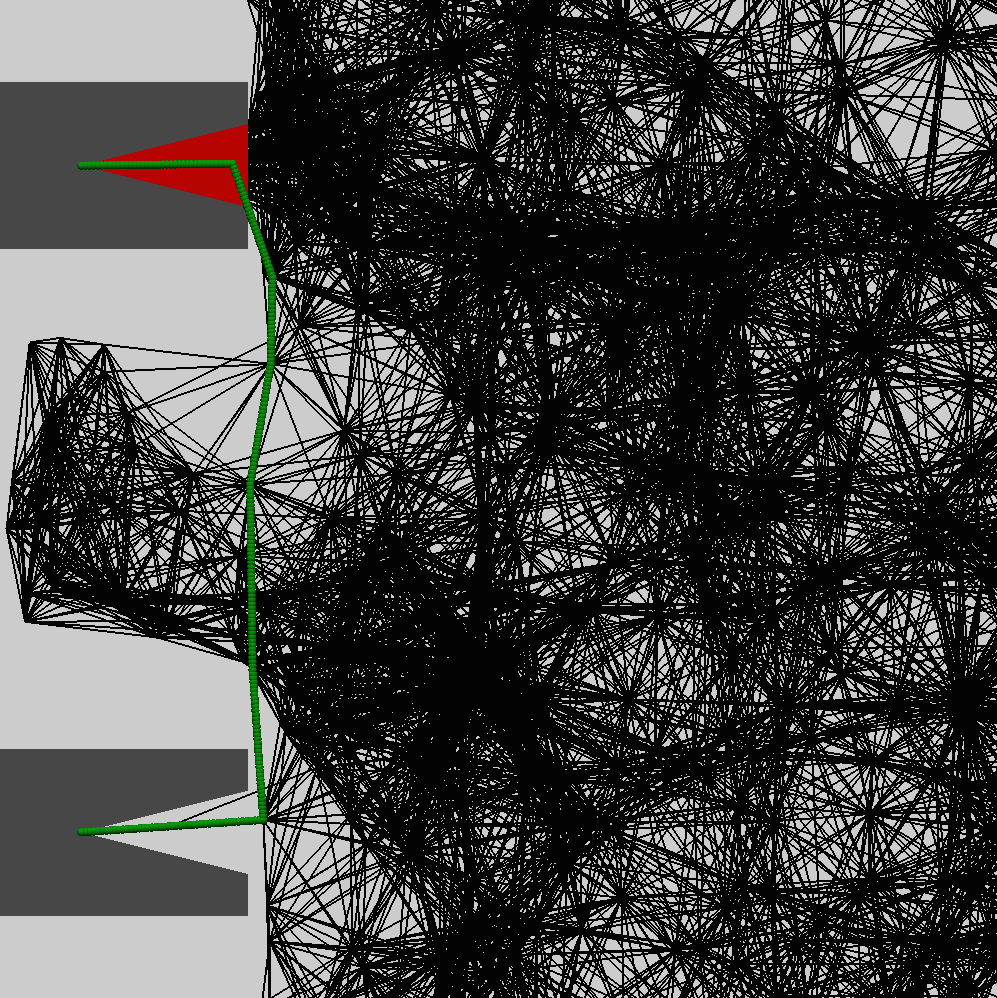}
    \includegraphics[width=0.25\textwidth]{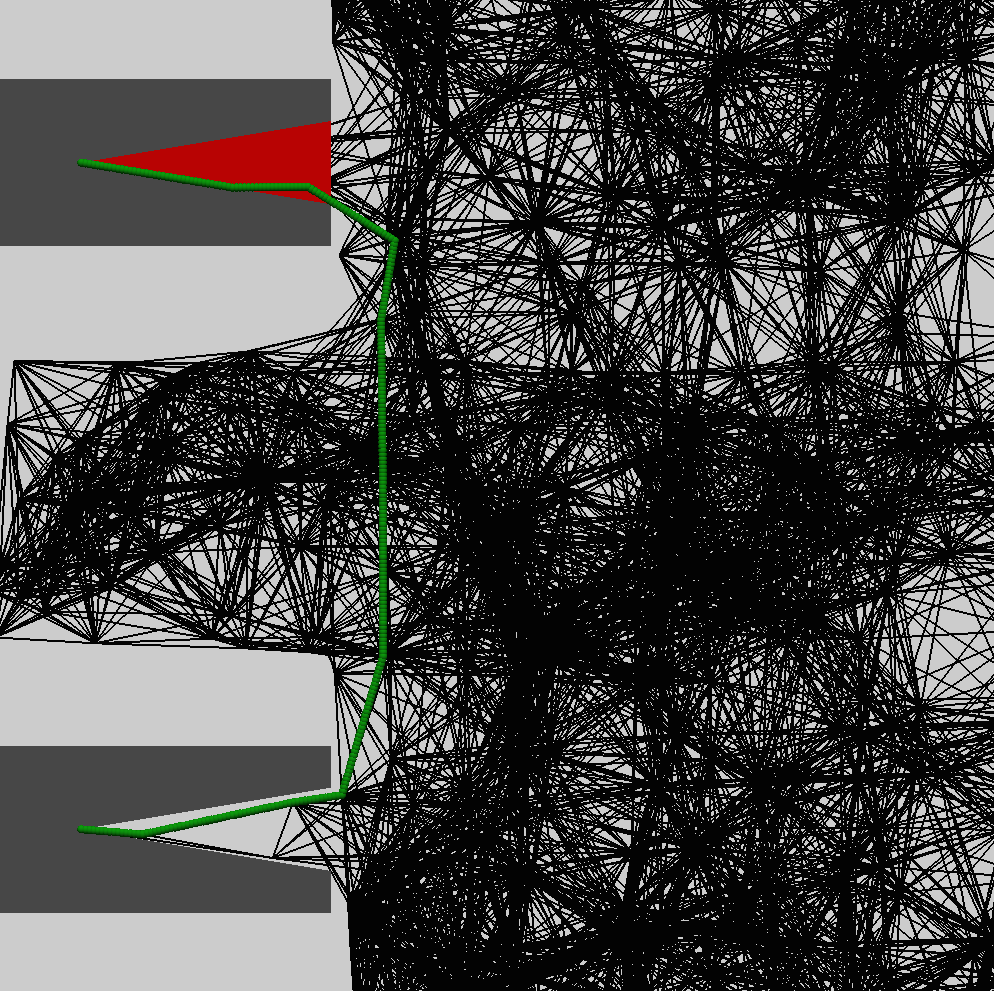}
    \includegraphics[width=0.25\textwidth]{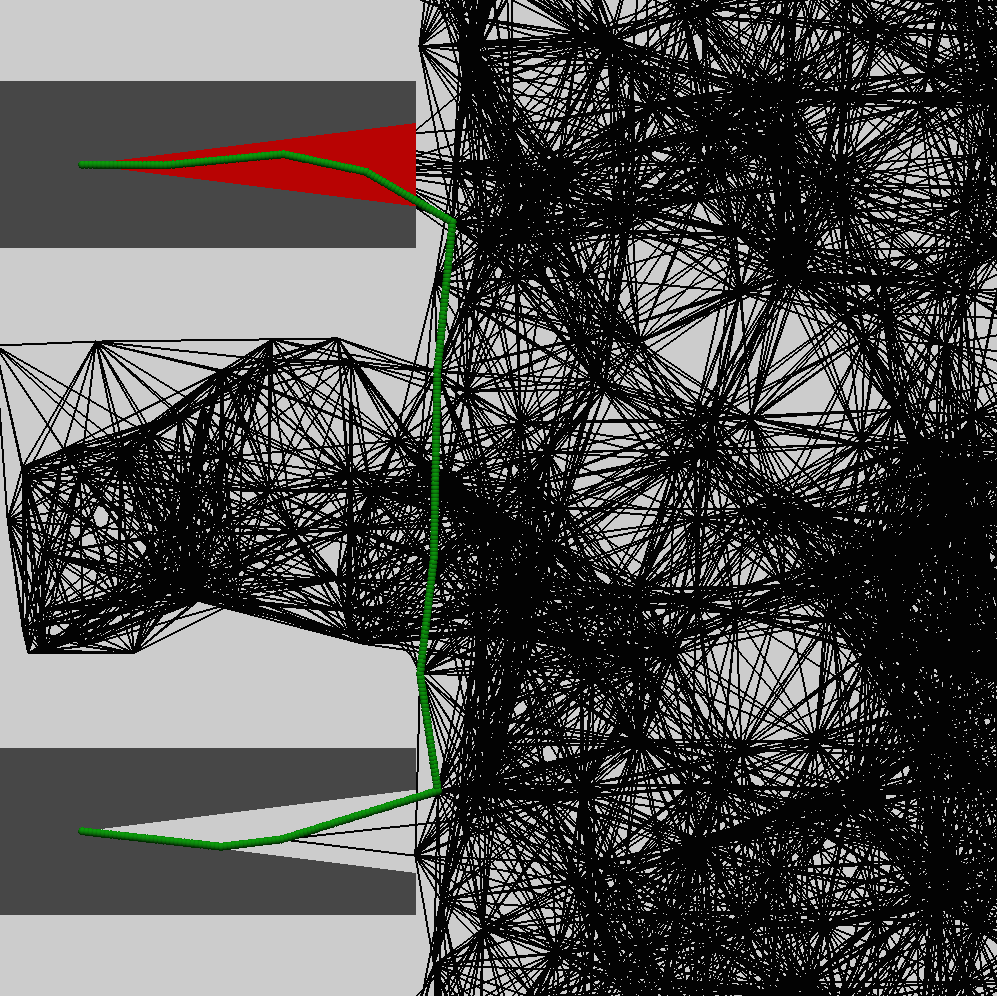}
\vspace{-0.1in}
    \caption{Instances of motion planning between two non-smooth boundary points in 2D, using the radius from \prmstar, for a red triangular robot that can only translate. The start is shown, and the green arc shows the  path traced by the robot's apex to reach the lower triangular cavity.}
    \label{fig:coneplanning}
    \vspace{-0.4in}
\end{figure} 

\section{Discussion}
\label{sec:discussion}
This work aims to highlight useful analysis tools for understanding the
properties of sampling-based \tamp\ planners in order to achieve AO.
In particular, this paper argues that given reasonable assumptions about the \tamp planning problem \edit{an algorithm that can guarantee sufficient sampling of mode transitions as well as orbit interiors will retain the AO properties of the underlying sampling-based motion planner.}
\edit{Another contribution is the relaxation of the smooth boundary assumption widely applied by AO motion and \tamp planners.
This work shows that it suffices that such boundary points are countably finite along the path and permit hypercones that open into the interior of the space. Moving away from smooth boundaries is reassuring as smoothness is hard to justify in practical problems, especially in manipulation problems with contact. It is also an issue for many motion planning queries, such as docking at a charging station.}

\edit{
In term of scalability, and practical performance, the algorithmic structure presented in this work is rather general, and if implemented naively would prove inefficient. Devising effective admissible heuristics in \tamp\ is critical for the quick discovery of high-quality solutions. It is also of interest to study whether the domain of transition sequences permit smoothing operations that are typically used in motion planning. The description of the homotopic properties in the multi-modal space can prove useful tools in improving \tamp\ solutions.
The study of convergence rates and the inspection of finite time properties for AO planners is an important consideration. 
The design of practically efficient \tamp planners for realistic problems remains an active research area. 
}

{
\bibliographystyle{spmpsci}

\begin{thebibliography}{10}
\providecommand{\url}[1]{{#1}}
\providecommand{\urlprefix}{URL }
\expandafter\ifx\csname urlstyle\endcsname\relax
  \providecommand{\doi}[1]{DOI~\discretionary{}{}{}#1}\else
  \providecommand{\doi}{DOI~\discretionary{}{}{}\begingroup
  \urlstyle{rm}\Url}\fi

\bibitem{alami1990geometrical}
Alami, R., Simeon, T., Laumond, J.P.: A geometrical approach to planning
  manipulation tasks. the case of discrete placements and grasps.
\newblock In: ISRR (1990)

\bibitem{cholaquidis2014poincare}
Cholaquidis, A., Cuevas, A., Fraiman, R.: On poincar{\'e} cone property.
\newblock The Annals of Statistics \textbf{42}(1), 255--284 (2014)

\bibitem{dantam2016incremental}
Dantam, N.T., Kingston, Z.K., Chaudhuri, S., Kavraki, L.E.: Incremental task
  and motion planning: A constraint-based approach.
\newblock In: RSS (2016)

\bibitem{Dobson:2015_MAM}
Dobson, A., Bekris, K.E.: {Planning Representations and Algorithms for
  Prehensile Multi-Arm Manipulation}.
\newblock In: IEEE IROS (2015)

\bibitem{Dogar:2011ve}
Dogar, M.R., Srinivasa, S.S.: {A Framework for Push-Grasping in Clutter}.
\newblock In: RSS (2011)

\bibitem{garrett2017sample}
Garrett, C.R., Lozano-P{\'e}rez, T., Kaelbling, L.P.: Sample-based methods for
  factored task and motion planning.
\newblock In: RSS (2017)

\bibitem{garrett2018ffrob}
Garrett, C.R., Lozano-Perez, T., Kaelbling, L.P.: {FFRob}: Leveraging symbolic
  planning for efficient task and motion planning.
\newblock IJRR \textbf{37}(1), 104--136 (2018)

\bibitem{hauser2010multi}
Hauser, K., Latombe, J.C.: Multi-modal motion planning in non-expansive spaces.
\newblock IJRR \textbf{29}(7), 897--915 (2010)

\bibitem{Hauser2011Randomized-Multi-Modal-}
Hauser, K., Ng-Thow-Hing, V.: {Randomized Multi-Modal Motion Planning for a
  Humanoid Robot Manipulation Task}.
\newblock IJRR \textbf{30}(6) (2011)

\bibitem{havur2014geometric}
Havur, G., Ozbilgin, G., Erdem, E., Patoglu, V.: Geometric rearrangement of
  multiple movable objects on cluttered surfaces: A hybrid reasoning approach.
\newblock In: IEEE ICRA (2014)

\bibitem{janson2015fast}
Janson, L., Schmerling, E., Clark, A., Pavone, M.: Fast marching tree: A fast
  marching sampling-based method for optimal motion planning in many
  dimensions.
\newblock IJRR  (2015)

\bibitem{Kaelbling:2011gb}
Kaelbling, L.P., Lozano-P\'{e}rez, T.: {Hierarchical Task and Motion Planning
  in the Now}.
\newblock In: IEEE ICRA (2011)

\bibitem{Karaman2011Sampling-based-}
Karaman, S., Frazzoli, E.: {Sampling-based Algorithms for Optimal Motion
  Planning}.
\newblock IJRR \textbf{30}(7) (2011)

\bibitem{Kavraki1996Probabilistic-R}
Kavraki, L.E., Svestka, P., Latombe, J.C., Overmars, M.: {Probabilistic
  Roadmaps for Path Planning in High-Dimensional Configuration Spaces}.
\newblock {IEEE Transactions on Robotics and Automation} \textbf{12}(4) (1996)

\bibitem{koga1994multi}
Koga, Y., Latombe, J.C.: On multi-arm manipulation planning.
\newblock In: IEEE ICRA (1994)

\bibitem{li2016asymptotically}
Li, Y., Littlefield, Z., Bekris, K.E.: Asymptotically optimal sampling-based
  kinodynamic planning.
\newblock IJRR \textbf{35}(5), 528--564 (2016)

\bibitem{penrose2003random}
Penrose, M.: Random geometric graphs.
\newblock 5. Oxford university press (2003)

\bibitem{schmitt2017optimal}
Schmitt, P.S., Neubauer, W., Feiten, W., Wurm, K.M., Wichert, G.V., Burgard,
  W.: Optimal, sampling-based manipulation planning.
\newblock In: ICRA. IEEE (2017)

\bibitem{shome2019anytime}
Shome, R., Bekris, K.E.: Anytime multi-arm task and motion planning for
  pick-and-place of individual objects via handoffs.
\newblock In: IEEE MRS, pp. 37--43. IEEE (2019)

\bibitem{shome2018rearrangement}
Shome, R., Solovey, K., Yu, J., Halperin, D., Bekris, K.E.: Fast, high-quality
  dual-arm rearrangement in synchronous, monotone tabletop setups.
\newblock In: WAFR (2018)

\bibitem{solovey2020critical}
Solovey, K., Kleinbort, M.: The critical radius in sampling-based motion
  planning.
\newblock IJRR \textbf{39}(2-3), 266--285 (2020)

\bibitem{solovey2018new}
Solovey, K., Salzman, O., Halperin, D.: New perspective on sampling-based
  motion planning via random geometric graphs.
\newblock IJRR \textbf{37}(10), 1117--1133 (2018)

\bibitem{thomason2019unified}
Thomason, W., Knepper, R.A.: A unified sampling-based approach to integrated
  task and motion planning.
\newblock In: ISRR (2019)

\bibitem{toussaint2015logic}
Toussaint, M.: Logic-geometric programming: An optimization-based approach to
  combined task and motion planning.
\newblock In: International Joint Conference on Artificial Intelligence (2015)

\bibitem{vega2016asymptotically}
Vega-Brown, W., Roy, N.: Asymptotically optimal planning under
  piecewise-analytic constraints.
\newblock In: WAFR (2016)

\end{thebibliography}

}

\end{document}